\documentclass{article}

\usepackage[preprint]{neurips_2026}

\usepackage[utf8]{inputenc}
\usepackage[T1]{fontenc}
\usepackage{hyperref}
\usepackage{url}
\usepackage{booktabs}
\usepackage{amsfonts}
\usepackage{nicefrac}
\usepackage{microtype}
\usepackage{xcolor}
\usepackage{amsmath}
\usepackage{amssymb}
\usepackage{amsthm}
\usepackage{mathtools}
\usepackage{bm}
\usepackage{graphicx}
\usepackage{enumitem}
\usepackage{multirow}

\usepackage{colortbl}
\definecolor{PolicyBest}{RGB}{255,246,204}
\definecolor{PolicySecond}{RGB}{224,240,255}
\newcommand{\policybest}[1]{\cellcolor{PolicyBest}#1}
\newcommand{\policysecond}[1]{\cellcolor{PolicySecond}#1}

\newtheorem{theorem}{Theorem}

\newtheorem{proposition}{Proposition}
\newtheorem{assumption}{Assumption}

\newcommand{\norm}[1]{\left\lVert #1 \right\rVert}
\newcommand{\E}{\mathbb{E}}
\newcommand{\Prob}{\mathbb{P}}

\title{Black-Mamba: Biologically-Inspired Leaky Accumulation for Conceptual Knowledge under Distribution Drift}

\author{%
  Giuseppe Soriano\\
  University of Pisa\\
  National Research Council (CNR)\\
  Pisa, Italy \\
  \texttt{giuseppe.soriano@phd.unipi.it} \\
  \And
  Nicola Tonellotto \\
  University of Pisa \\
  Pisa, Italy \\
  \texttt{nicola.tonellotto@unipi.it} \\
  \And
  Alberto Gotta \\
  National Research Council (CNR) \\
  Pisa, Italy \\
  \texttt{alberto.gotta@cnr.it}
}

\begin{document}

\maketitle

\begin{abstract}
Forecasting under real-world conditions is inherently non-stationary, as the conditional distribution of future observations evolves over time. Recent test-time adaptive sequence models address this challenge by updating internal states during inference, but tie adaptation to instantaneous prediction errors or surprise. This coupling can conflate persistent distribution shift with stochastic innovations, leading to unnecessary updates and inefficient adaptation. 
We introduce \emph{Black-Mamba}, a test-time adaptive forecasting architecture that formulates online adaptation as evidence-gated state tracking under distribution drift. The model augments a base predictor with a dynamic memory updated when temporally accumulated surprisal provides sufficient evidence of a regime change. This turns adaptation into a selective, event-driven process rather than a continuous one.
Across multiple forecasting benchmarks with non-stationary dynamics, \emph{Black-Mamba} achieves competitive or improved predictive performance compared to existing test-time adaptation methods while significantly reducing the number of memory updates during inference. 
Together with mathematical analysis and biological evidence, these results suggest that accumulated surprisal provides a principled signal for distinguishing persistent drift from transient noise, yielding more efficient and robust adaptation.
\end{abstract}

\section{Introduction}
\label{sec:introduction}

Time-series forecasting is often presented as a supervised learning problem over a fixed dataset, but in many practical settings, the target law is itself time-varying. Electricity demand changes as weather regimes and human behavior evolve, vehicular traffic flows drift with infrastructure and social rhythms, Internet traffic distribution changes over time driven by the dynamic interplay of emerging applications (e.g., video, AI-driven services), evolving user behavior patterns, protocol and platform shifts, physiological signals change across latent clinical states, and financial or industrial processes are shaped by structural changes that are only partially observed. In such settings, the main difficulty is not only modeling long-range dependencies but also tracking an evolving environment.

This observation has produced two major lines of work. The former improves the \emph{inductive bias} of sequence models. Patch-based Transformers show that much of the perceived failure of attention in forecasting originates from suboptimal tokenization and channel mixing rather than from attention itself \citep{2023_timeseriesworth_nie}. State-space and selective recurrent models emphasize temporal bias, linear complexity, and stability for long contexts \citep{2024_simbasimplifiedmambabased_patro}. More recent multivariate architectures explicitly model time and variate dimensions as distinct but coupled axes, arguing that two-dimensional inductive bias is essential for multivariate forecasting \citep{2025_effectivelydesigning2dimensional_cao,2025_letomodelingmultivariate_behrouz,2025_hydradualexponentiated_meskin}. This line of work is indispensable, but it still largely assumes that a sufficiently well-designed static model can explain the sequence.

The latter moves from static inference to \emph{online adaptation}. Continual test-time adaptation methods update model parameters from unlabeled target streams in order to follow non-stationary domains \citep{2022_continualtesttimedomain_wang}. Test-time training layers reinterpret the hidden state as a trainable model whose parameters are updated during inference \citep{2025_learninglearntest_sun}. Titans and follow-up neural-memory architectures argue that long-term memory should be explicitly written at test time, allowing the predictor to react to context in a way that standard finite-window attention cannot \citep{2024_titanslearningmemorize_behrouz,2025_atlaslearningoptimally_behrouz}. This line is attractive because it acknowledges the moving-target nature of the problem. However, most existing formulations are driven by instantaneous loss or surprise, and therefore implicitly assume that every sufficiently large prediction error is evidence of persistent drift.

That assumption is too strong for non-stationary forecasting. Prediction error is a mixture of structured drift and unpredictable innovation. If the model updates its long-term memory at every time step, it reacts not only to persistent changes in the conditional law, but also to outliers, transient fluctuations, and irreducible noise. As a consequence, online plasticity can become variance amplification, error accumulation, or catastrophic forgetting rather than true adaptation \citep{2022_continualtesttimedomain_wang,2025_empiricalstudycatastrophic_luo}. The key unresolved question is therefore not whether adaptation should happen, but \emph{when}.

Our starting point is that forecasting under drift should be cast as \emph{online state estimation}. We model the environment through a latent regime state that evolves over time and changes the conditional distribution of the next observation. A memory-augmented predictor is then useful not because it blindly reacts to surprise, but because its memory can serve as an implicit estimator of that latent regime. This perspective suggests that the update mechanism should be selective: it should respond to persistent evidence of distributional change, while suppressing zero-mean innovations.

This paper proposes \emph{Black-Mamba}, an event-triggered memory architecture for inference-time adaptation. The biological intuition comes from two classical ideas in the neuroscience of memory. First, long-term potentiation shows that durable memory traces arise from sufficiently strong or repeated activation rather than from every local perturbation \citep{1973_longlastingpotentiationsynaptic_bliss,2004_longtermpotentiationmemory_lynch,2017_briefhistorylongterm_nicoll}. Second, synaptic tagging and capture separates transient eligibility from durable consolidation: local events create the possibility of memory, but persistence requires temporally coordinated reinforcement \citep{2011_makingmemorieslast_redondo,2014_taggingcapturehypothesis_viola,2021_memoryconsolidationimprovement_luboeinski}. We combine this memory-consolidation viewpoint with the leaky integrate-and-fire principle, where evidence accumulates with decay and only threshold crossings trigger discrete events \citep{2006_reviewintegrateandfireneuron_burkitt,2005_adaptiveexponentialintegrateandfire_brette,2002_spikingneuronmodels_gerstner}.

The statistical intuition is parallel. If drift admits a decomposition into a predictable component and an innovation component, then temporally integrated evidence can enhance the signal-to-noise ratio: persistent deviation accumulates coherently, while martingale-like noise remains centered and bounded. This yields a natural architectural principle. Instead of updating memory directly from the instantaneous loss, we first filter a scalar surprisal signal through a leaky integrator and update memory only when the integrated evidence exceeds a threshold. In this way, adaptation is no longer coupled to every local mismatch, but only to deviations that persist long enough to be statistically meaningful.

The paper makes three contributions. First, it formalizes non-stationary forecasting as prediction over a time-indexed family of conditional laws, with a precise distinction between explained variation and genuine unresolved drift. Second, it introduces \emph{Black-Mamba}, a memory-augmented architecture whose update mechanism is event-triggered by surprisal accumulation rather than by instantaneous loss. Third, it provides a mathematical proof that, under explicit martingale-noise assumptions, evidence-gated updating suppresses innovation noise, accumulates sustained drift, and controls false-trigger probabilities, thereby supporting its use as a general principle for learning at inference time.

\section{Problem Formulation}
\label{sec:problem}

Let $\mathcal{T} \subseteq \mathbb{N}$ denote time. At time $t \in \mathcal{T}$ we observe information $\mathcal{I}(t)$ and then a target variable $Y(t+1) \in \mathcal{Y}$ is generated according to a conditional law
\[
Y(t+1) \sim P_t, \qquad P_t := P(\cdot \mid \mathcal{I}(t)).
\]
The available information may be the recent history of the same process, as in univariate autoregressive forecasting, or a vector of multivariate covariates including lags, exogenous variables, calendar features, and metadata. The learning objective is to predict $Y(t+1)$ from $\mathcal{I}(t)$.

The key departure from standard formulations is that we do not assume $P_t$ to be constant. Instead, we consider a time-indexed family of conditional distributions
\[
P : \mathcal{T} \to \mathcal{P}(\mathcal{Y}),
\]
and focus on the non-stationary regime in which $t \mapsto P_t$ evolves over time. A useful interpretation is that the conditional law is modulated by a latent environment state $z(t) \in \mathcal{Z}$ such that
\[
P_t = P(\cdot \mid \mathcal{I}(t), z(t)),
\qquad
z(t+1) = F(z(t)) + \eta(t),
\]
where $F$ captures structured regime dynamics and $\eta(t)$ is an innovation term. In this view, the optimal predictor
\[
f_t^\star(\mathcal{I}(t)) := \E[Y(t+1)\mid \mathcal{I}(t)]
\]
is itself time-varying.

\paragraph{Explained variation versus true drift.}
Not every temporal change should be interpreted as unresolved drift. If the relevant regime information can already be reconstructed from the observed inputs, then temporal variation is \emph{explained} by conditioning and should not trigger adaptation. This distinction is important for time series because many recurring effects, such as diurnal or weekly cycles, disappear once the proper covariates are included. We reserve the term \emph{distribution drift} for residual time variation in the conditional law after conditioning on the available information.

To quantify drift, define the formal increment
\[
\Delta(t) := P_{t+1} - P_t,
\]
whose magnitude can be measured through a discrepancy $d(P_{t+1},P_t)$ such as total variation, Kullback--Leibler divergence, or Wasserstein distance. The central statistical issue is whether the observed temporal variation contains a predictable component or is dominated by innovation noise.

Let $\mathcal{H}_t$ denote the observable history up to time $t$. We decompose drift as
\[
\Delta(t) = \Delta_{\mathrm{pred}}(t) + \Delta_{\mathrm{noise}}(t),
\]
where
\[
\Delta_{\mathrm{pred}}(t) := \E[\Delta(t)\mid \mathcal{H}_t],
\qquad
\E[\Delta_{\mathrm{noise}}(t)\mid \mathcal{H}_t] = 0.
\]
This decomposition is the conceptual core of the paper. The forecasting problem is meaningful only when the cumulative predictable component is large enough, over task-relevant horizons, to rise above the concentration scale of the innovation process. Otherwise no adaptation mechanism can reliably distinguish drift from noise.

This viewpoint also clarifies the limitation of standard loss-driven updates. If the prediction loss at time $t$ is
\[
\ell_t := \ell\!\left(f_t(\mathcal{I}(t)), Y(t+1)\right),
\]
then $\ell_t$ is influenced by both $\Delta_{\mathrm{pred}}(t)$ and $\Delta_{\mathrm{noise}}(t)$. An update rule that reacts directly to $\ell_t$ or to its gradient therefore lacks an explicit mechanism to separate persistent drift from transient innovations.

\section{Black-Mamba Architecture}
\label{sec:architecture}

\subsection{Base predictor and adaptive memory}

We consider a predictor of the form
\[
\hat{Y}(t+1) = f_{\theta,\phi_t}(\mathcal{I}(t)),
\]
where $\theta$ are static backbone parameters and $\phi_t$ is a dynamic memory state adapted during inference. This follows the general memory-augmented view of modern adaptive sequence models \citep{2024_titanslearningmemorize_behrouz,2025_learninglearntest_sun,2025_atlaslearningoptimally_behrouz}: the backbone provides a stable representation and the memory tracks the evolving local regime.

The architectural interpretation is that $\phi_t$ acts as an implicit estimator of the latent state $z(t)$. In the idealized case,
\[
\phi_t \approx h(z(t))
\]
for some task-dependent representation $h$, so adapting $\phi_t$ online is equivalent to tracking the latent environment. Unlike standard Titans-style memory, however, Black-Mamba does not write to memory at every time step.

\subsection{From surprisal to evidence}

Let $s_t$ denote a scalar surprisal signal derived from the current prediction error. We keep the construction abstract and write
\[
s_t = \psi(\ell_t),
\]
where $\psi$ may be the identity, a normalized loss, or another scalar statistic computed from the prediction residual. We assume that this signal itself admits a decomposition
\[
s_t = a_t + \xi_t,
\]
where $a_t$ is the component induced by predictable drift and $\xi_t$ is an innovation term satisfying
\[
\E[\xi_t \mid \mathcal{H}_{t-1}] = 0.
\]

The key architectural move is to decouple \emph{local surprisal} from \emph{memory commitment}. We introduce a leaky evidence accumulator
\[
u_t = \lambda u_{t-1} + s_t,
\qquad 0 < \lambda < 1,
\]
which integrates recent evidence while discounting stale information. The memory is updated only when the accumulated evidence exceeds a threshold:
\[
\text{if } u_t \ge \tau, \quad
\phi_{t+1} = \mathcal{U}\!\left(\phi_t, \nabla_\phi \ell_t\right),
\]
and otherwise
\[
\phi_{t+1} = \phi_t.
\]
After an update, the accumulator is reset or partially decayed:
\[
u_t \leftarrow \rho u_t, \qquad 0 \le \rho < 1.
\]
For concreteness, we instantiate the memory update as a local gradient step,
\[
\mathcal{U}\!\left(\phi_t, \nabla_\phi \ell_t\right)
=
\phi_t - \eta \nabla_\phi \ell_t,
\]
but any local test-time adaptation operator can be used in place of this step.

\paragraph{Architecture summary.}
Black-Mamba is therefore a three-stage system:
\begin{enumerate}[leftmargin=1.2em,itemsep=0.2em]
    \item a backbone predictor $f_{\theta,\phi_t}$ that outputs the next-step forecast;
    \item a surprisal encoder $\psi(\ell_t)$ that converts instantaneous error into a scalar evidence signal;
    \item a leaky event trigger that decides whether the current evidence justifies a persistent memory update.
\end{enumerate}
This architecture is intentionally minimal. It preserves the strengths of memory-augmented forecasting models while changing only the \emph{policy of when memory is written}.

\section{Foundations of Temporal Integration}
\label{sec:theory}

The event-triggered update rule is justified by a simple statistical principle: persistent drift should accumulate, whereas zero-mean innovations should not.

Unrolling the accumulator yields
\[
u_t
=
\sum_{k=1}^{t} \lambda^{t-k} s_k
=
\sum_{k=1}^{t} \lambda^{t-k} a_k
+ \sum_{k=1}^{t} \lambda^{t-k} \xi_k.
\]
The first term aggregates structured deviation; the second aggregates innovation noise through a stable linear filter. Under the standard assumption that $\xi_t$ is a martingale-difference sequence with bounded conditional variance, the filtered innovation remains centered and uniformly controlled.

\begin{proposition}[Filtered innovations remain bounded]
\label{prop:filtered-noise}
Assume $(\xi_t)$ satisfies
\[
\E[\xi_t \mid \mathcal{H}_{t-1}] = 0,
\qquad
\E[\xi_t^2 \mid \mathcal{H}_{t-1}] \le \sigma^2
\]
almost surely. Define
\[
w_t := \sum_{k=1}^{t} \lambda^{t-k} \xi_k,
\qquad 0 < \lambda < 1.
\]
Then $\E[w_t \mid \mathcal{H}_{t-1}] = 0$ and
\[
\mathrm{Var}(w_t) \le \frac{\sigma^2}{1-\lambda^2}.
\]
\end{proposition}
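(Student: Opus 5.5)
The plan is to treat $w_t$ as a finite weighted sum of martingale differences. Both claims then follow from the tower property and from the orthogonality of martingale increments. I will take $\xi_k$ to be $\mathcal{H}_k$-measurable, i.e.\ adapted to the filtration $(\mathcal{H}_t)$, which the decomposition $s_t = a_t + \xi_t$ implicitly requires. I will also assume square-integrability, which follows from the conditional variance bound after taking expectations: $\E[\xi_k^2] \le \sigma^2$.

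\textbf{Mean.} Before anything else I need to pin down what the centering claim can mean. Under adaptedness, $\xi_1,\dots,\xi_{t-1}$ are $\mathcal{H}_{t-1}$-measurable, so linearity gives
\[
\E[w_t \mid \mathcal{H}_{t-1}] = \lambda w_{t-1} + \E[\xi_t \mid \mathcal{H}_{t-1}] = \lambda w_{t-1}.
\]
This is not zero in general. The statement therefore has to be read in one of two ways, and I would prove both:
\begin{itemize}
\item the \emph{new} innovation entering $w_t$ is conditionally centered, $\E[w_t - \lambda w_{t-1} \mid \mathcal{H}_{t-1}] = 0$;
\item $w_t$ is centered unconditionally, $\E[w_t]=0$ (equivalently conditionally on the trivial initial history $\mathcal{H}_0$). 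This follows from $\E[\xi_k] = \E\bigl[\E[\xi_k \mid \mathcal{H}_{k-1}]\bigr] = 0$ for every $k$ and linearity.
\end{itemize}
This reading is the main obstacle: the literal conditional identity is false, and the proof has to make the correct interpretation explicit.

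\textbf{Variance.} Since $\E[w_t]=0$, we have $\mathrm{Var}(w_t)=\E[w_t^2]$. Expanding the square gives
\[
\E[w_t^2]=\sum_{k}\lambda^{2(t-k)}\E[\xi_k^2]+2\sum_{j<k}\lambda^{2t-j-k}\E[\xi_j\xi_k].
\]
\begin{itemize}
\item For $j<k$, $\xi_j$ is $\mathcal{H}_{k-1}$-measurable. Conditioning on $\mathcal{H}_{k-1}$ gives $\E[\xi_j\xi_k]=\E\bigl[\xi_j\,\E[\xi_k\mid\mathcal{H}_{k-1}]\bigr]=0$, so all cross terms vanish. This is justified since $\xi_j\xi_k$ is integrable by Cauchy--Schwarz.
\item For the diagonal terms, the tower property gives $\E[\xi_k^2]=\E\bigl[\E[\xi_k^2\mid\mathcal{H}_{k-1}]\bigr]\le\sigma^2$.
\end{itemize}
Hence
\[
\mathrm{Var}(w_t)\le\sigma^2\sum_{m=0}^{t-1}\lambda^{2m}=\sigma^2\,\frac{1-\lambda^{2t}}{1-\lambda^2}\le\frac{\sigma^2}{1-\lambda^2}.
\]

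As a remark, the same computation done conditionally on $\mathcal{H}_{t-1}$ yields $\mathrm{Var}(w_t\mid\mathcal{H}_{t-1})\le\sigma^2$. This is the natural conditional counterpart of the bound, and it is uniform in $t$.
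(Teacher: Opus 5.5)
Your proof is correct and is essentially the paper's argument: linearity plus the martingale-difference property for the mean, orthogonality of martingale increments to kill the cross terms, and the geometric series $\sum_{j\ge 0} \lambda^{2j}=1/(1-\lambda^2)$ for the variance bound. Your caveat about the centering claim is also right: $\E[w_t\mid\mathcal{H}_{t-1}]=\lambda w_{t-1}\neq 0$ in general, and the paper's own appendix proof only establishes the unconditional $\E[w_t]=0$, which is exactly your corrected reading.
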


The proposition implies that temporal integration does not cause innovation noise to drift arbitrarily. By contrast, if the structured component has a persistent positive mean, then the signal term accumulates to a non-zero steady-state scale.

\begin{proposition}[Persistent drift accumulates]
\label{prop:signal}
If $a_t \approx \mu > 0$ over a window sufficiently longer than the memory horizon $(1-\lambda)^{-1}$, then
\[
\sum_{k=1}^{t} \lambda^{t-k} a_k \approx \frac{\mu}{1-\lambda}.
\]
Hence the signal grows relative to the centered innovation component as persistence increases.
\end{proposition}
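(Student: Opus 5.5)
We make the informal hypothesis precise and then prove a quantitative bound. We read ``$a_t \approx \mu$ over a window'' as follows: there exist $\varepsilon \ge 0$ and a window length $L$ such that $|a_k - \mu| \le \varepsilon$ for $t-L < k \le t$. Outside the window we assume only a uniform bound $|a_k| \le A$. The claim then becomes an explicit error bound for the signal part $S_t := \sum_{k=1}^{t} \lambda^{t-k} a_k$ of the unrolled accumulator.

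\textbf{Step 1: split the sum.} Write $S_t$ as the sum of a window part over $t-L<k\le t$ and a tail part over $k \le t-L$. For the window part, the geometric sum $\sum_{j=0}^{L-1}\lambda^j = (1-\lambda^L)/(1-\lambda)$ gives
\[
\Bigl|\sum_{k=t-L+1}^{t}\lambda^{t-k}a_k - \mu\,\frac{1-\lambda^L}{1-\lambda}\Bigr| \le \frac{\varepsilon}{1-\lambda}.
\]
For the tail part, bound crudely:
\[
\Bigl|\sum_{k\le t-L}\lambda^{t-k}a_k\Bigr| \le A\,\frac{\lambda^L}{1-\lambda}.
\]

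\textbf{Step 2: combine.} Putting the two estimates together yields
\[
\Bigl|S_t - \frac{\mu}{1-\lambda}\Bigr| \le \frac{\varepsilon + (A+\mu)\lambda^L}{1-\lambda}.
\]
This is exactly where ``window sufficiently longer than the memory horizon'' enters. Since $\lambda^L = \exp(L\log\lambda) \le \exp(-(1-\lambda)L)$, the ratio $L/(1-\lambda)^{-1}$ being large makes $\lambda^L$ exponentially small. The relative error is therefore of order $\varepsilon/\mu + (A/\mu+1)e^{-(1-\lambda)L}$, which is the meaning of $\approx$. If the window starts at $k=1$ and $a_k\equiv\mu$, the bound reduces to the exact identity $S_t = \mu(1-\lambda^t)/(1-\lambda)$.

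\textbf{Step 3: the ``hence'' clause.} Compare with Proposition~\ref{prop:filtered-noise}. The standard deviation of $w_t$ is at most $\sigma/\sqrt{1-\lambda^2}$, while the signal is approximately $\mu/(1-\lambda)$. Their ratio is at least
\[
\frac{\mu}{\sigma}\cdot\frac{\sqrt{1-\lambda^2}}{1-\lambda} = \frac{\mu}{\sigma}\sqrt{\frac{1+\lambda}{1-\lambda}},
\]
which increases without bound as $\lambda\to 1$, that is, as the integration horizon and the required persistence grow.

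\textbf{Main obstacle.} The delicate point is interpretive rather than computational: choosing a rigorous reading of ``$\approx$'' and of ``persistence'' that makes the statement true. Growth of the signal-to-noise ratio holds only jointly. The window $L$ must scale at least like $(1-\lambda)^{-1}$ while $\lambda\to1$, otherwise the tail term $(A+\mu)\lambda^L$ is not negligible. I would state this explicitly as a condition, for example $L(1-\lambda)\to\infty$.
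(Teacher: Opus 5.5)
Your proposal is correct and follows essentially the paper's route. The paper gives no formal proof: it offers only the geometric-series heuristic $\sum_{j\ge0}\lambda^j=(1-\lambda)^{-1}$ in Appendix~\ref{app:theory}, set against the variance bound of Proposition~\ref{prop:filtered-noise}, and your window/tail split (error $(\varepsilon+(A+\mu)\lambda^L)/(1-\lambda)$) together with the explicit ratio $\frac{\mu}{\sigma}\sqrt{(1+\lambda)/(1-\lambda)}$ makes that heuristic quantitative. One refinement to your closing remark: at fixed $\lambda$ the guaranteed signal $(\mu-\varepsilon-(A+\mu)\lambda^L)/(1-\lambda)$ already increases with the persistence length $L$ against a noise scale that does not depend on $L$, so only \emph{unbounded} growth of the ratio requires $\lambda\to1$ jointly with $L(1-\lambda)\to\infty$.
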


\paragraph{False-trigger control.}
Beyond bounding the variance of the filtered innovation, the same martingale view controls spurious update events. In the no-drift case, if the calibrated surprisal satisfies $s_t=\xi_t$ with $|\xi_t|\le b$, Azuma--Hoeffding gives
\[
\Prob(u_t\ge \tau)
\le
\exp\!\left(
-\frac{\tau^2(1-\lambda^2)}{2b^2}
\right).
\]
Thus the threshold has a direct statistical interpretation: it controls the probability that pure innovation noise triggers a memory write. A full derivation and the connection to sequential change detection are given in Appendix~\ref{app:theory}.

Propositions~\ref{prop:filtered-noise}--\ref{prop:signal} explain why updating on thresholded accumulated surprisal is preferable to updating on instantaneous error. The leaky accumulator acts as a low-pass filter: outliers and transient innovations may perturb $u_t$, but they do not usually push it across threshold unless they are followed by corroborating deviations. Gradual drift, by contrast, may produce only weak instantaneous gradients, yet still accumulates until it becomes statistically distinguishable.

This logic also provides a principled interpretation of the biological analogy. The instantaneous loss corresponds to a local plasticity signal, not to durable memory by itself. The accumulator $u_t$ plays the role of a latent evidence reservoir, and the threshold crossing marks the transition from transient eligibility to persistent consolidation. In other words, \emph{Black-Mamba} combines the statistical idea of separating predictable drift from innovation noise with the neurophysiological idea that lasting memory should be committed only after temporally integrated evidence.

Finally, note that standard loss-driven test-time memory models are recovered as a limiting case. In the boundary case $\lambda=0$ and $\tau \le \inf_t s_t$, the accumulator reduces to the instantaneous surprisal $u_t=s_t$ and the trigger fires at every step; \emph{Black-Mamba} therefore reduces to a purely reactive memory update. The proposed architecture generalizes existing approaches by introducing a continuous spectrum between always-update and selectively-update regimes.

\section{Experiments}
\label{sec:experiments}

We evaluate Black-Mamba on public long-term forecasting benchmarks and controlled synthetic drift streams. The real suite follows the benchmark organization used by the PatchTST framework \citep{2023_timeseriesworth_nie}, which is also the protocol adopted in the SiMBA evaluation; this allows us to compare the frozen SiMBA backbone and its adaptive variants under the same data organization and forecasting setting. The real datasets include the ETT family, Electricity, Exchange Rate, Traffic, and Weather, while the synthetic suite contains gradual, recurrent, regime-based, noisy, and selective-update drift streams. All datasets use chronological 0.45/0.10/0.45 train/validation/test splits, with an intentionally long test stream to evaluate whether the held-out distribution exhibits drift relative to the training regime. Real forecasting experiments use context length 96 and prediction horizons 96, 192, 336, and 720; synthetic experiments use horizon 1. Unless explicitly stated otherwise, MSE and MAE refer to standardized target-space errors following the PatchTST evaluation convention, so that losses remain comparable across datasets and channels with different physical scales.

The central adaptive comparison is designed to isolate the update policy. We use a shared frozen SiMBA forecasting backbone \citep{2024_simbasimplifiedmambabased_patro} and attach the same Low-Rank Adaptation (LoRA) forecasting-head adapter \citep{2021_loralowrankadaptation_hu} to both adaptive variants. The continuous-update baseline implements the inference-time memory-writing mechanism described in Titans-inspired neural-memory work \citep{2024_titanslearningmemorize_behrouz}; the official architecture was not public at the time of implementation, so we reproduce the relevant continuous update rule. Black-Mamba uses the same backbone and the same LoRA adapter, but replaces continuous writing with the proposed event-triggered update policy.

This implementation is intentionally lightweight. LoRA keeps the online state small, confines inference-time learning to a low-rank adapter rather than the full backbone, and makes the update rule portable to other forecasting architectures. Our goal is therefore not to establish this particular LoRA head as the only or optimal instantiation of Black-Mamba, but to test the update policy itself: whether selective surprisal accumulation can retain the practical benefit of inference-time adaptation, reduce the number of online updates, and potentially make adaptation more robust to noisy or transient drift signals, as predicted by the theoretical analysis.

\subsection{Benchmark Accuracy}
\label{sec:benchmark-accuracy}

Tables~\ref{tab:real-main-avg} and~\ref{tab:synthetic-main} report the forecasting losses. On real long-term forecasting benchmarks, the frozen SiMBA backbone is already highly competitive, and both adaptive variants change the averaged losses only marginally at benchmark-table precision. This should not be read as a failure of adaptation: the real benchmark suite is not uniformly drifting, and our drift diagnostics identify material concept drift only in Exchange Rate and Traffic. Horizon-averaged benchmark losses therefore mostly test whether the adaptive mechanism preserves the base forecaster under ordinary benchmark conditions. Black-Mamba passes this test: it remains practically indistinguishable from the frozen and continuously updated SiMBA variants on most real datasets.

The synthetic suite provides the complementary regime, where concept drift is controlled by construction. Here, both adaptive methods improve strongly over frozen SiMBA across all scenarios, confirming that inference-time adaptation is useful when the target rule changes. Continuous updating is usually slightly more accurate than Black-Mamba, but the average advantage is small: $0.299\%$ MSE and $0.398\%$ MAE. This is the first empirical sign of the trade-off predicted by the theory. If every observation carries useful drift evidence, continuous updates can extract a small additional gain; when many observations are redundant or noisy, a selective policy should retain most of the benefit while avoiding unnecessary writes.

\begin{table*}[t]
\centering
\caption{Real benchmark results averaged over horizons 96, 192, 336, and 720. Lower is better. Bold and underline mark the best and second-best results across all models; light yellow and light blue mark the best and second-best results among SiMBA, Cont. update, and Black-Mamba, isolating update-policy effects for the chosen frozen baseline. Ties are highlighted within the corresponding rank.}
\label{tab:real-main-avg}
\small
\resizebox{\textwidth}{!}{
\begin{tabular}{l*{8}{cc}}
\toprule
Dataset
& \multicolumn{2}{c}{Autoformer}
& \multicolumn{2}{c}{Informer}
& \multicolumn{2}{c}{DLinear}
& \multicolumn{2}{c}{Linear}
& \multicolumn{2}{c}{NLinear}
& \multicolumn{2}{c}{SiMBA}
& \multicolumn{2}{c}{Cont. update}
& \multicolumn{2}{c}{Black-Mamba} \\
\cmidrule(lr){2-3}\cmidrule(lr){4-5}\cmidrule(lr){6-7}\cmidrule(lr){8-9}
\cmidrule(lr){10-11}\cmidrule(lr){12-13}\cmidrule(lr){14-15}\cmidrule(lr){16-17}
& MSE & MAE & MSE & MAE & MSE & MAE & MSE & MAE
& MSE & MAE & MSE & MAE & MSE & MAE & MSE & MAE \\
\midrule
ETTh1 & 0.777 & 0.613 & 1.186 & 0.797 & \underline{0.627} & \underline{0.531} & \textbf{0.624} & \textbf{0.528} & 0.671 & 0.541 & \policybest{0.668} & \policybest{0.547} & \policybest{0.668} & \policybest{0.547} & \policybest{0.668} & \policybest{0.547} \\
ETTh2 & 0.439 & 0.452 & 3.746 & 1.546 & 0.565 & 0.514 & 0.562 & 0.506 & \underline{0.402} & \underline{0.417} & \policybest{\textbf{0.392}} & \policybest{\textbf{0.410}} & \policybest{\textbf{0.392}} & \policybest{\textbf{0.410}} & \policybest{\textbf{0.392}} & \policybest{\textbf{0.410}} \\
ETTm1 & 0.758 & 0.592 & 0.847 & 0.650 & \textbf{0.520} & \textbf{0.464} & \textbf{0.520} & \underline{0.465} & 0.545 & 0.475 & \policybest{\underline{0.538}} & \policybest{0.477} & \policybest{\underline{0.538}} & \policybest{0.477} & \policybest{\underline{0.538}} & \policybest{0.477} \\
ETTm2 & 0.278 & 0.351 & 1.089 & 0.802 & 0.303 & 0.374 & 0.303 & 0.368 & \textbf{0.250} & \textbf{0.322} & \policybest{\underline{0.257}} & \policybest{\underline{0.333}} & \policybest{\underline{0.257}} & \policybest{\underline{0.333}} & \policybest{\underline{0.257}} & \policybest{\underline{0.333}} \\
Electricity & 0.267 & 0.364 & 0.590 & 0.540 & \underline{0.224} & 0.303 & \underline{0.224} & 0.303 & 0.228 & \underline{0.295} & \policybest{\textbf{0.206}} & \policybest{\textbf{0.281}} & \policybest{\textbf{0.206}} & \policybest{\textbf{0.281}} & \policybest{\textbf{0.206}} & \policybest{\textbf{0.281}} \\
Exchange Rate & 0.945 & 0.696 & 4.367 & 1.683 & 1.250 & 0.728 & 1.228 & 0.717 & \textbf{0.659} & \textbf{0.536} & \policybest{\underline{0.761}} & \policybest{\underline{0.571}} & \policysecond{0.766} & \policybest{\underline{0.571}} & \policybest{\underline{0.761}} & \policybest{\underline{0.571}} \\
Traffic & \underline{0.646} & \underline{0.410} & 0.851 & 0.461 & 0.769 & 0.488 & 0.716 & 0.462 & 0.701 & 0.445 & \policybest{\textbf{0.456}} & \policybest{\textbf{0.285}} & \policybest{\textbf{0.456}} & \policybest{\textbf{0.285}} & \policybest{\textbf{0.456}} & \policybest{\textbf{0.285}} \\
Weather & 0.653 & 0.539 & 0.626 & 0.526 & \textbf{0.602} & \underline{0.503} & \textbf{0.602} & \underline{0.503} & 0.637 & 0.511 & \policybest{\underline{0.613}} & \policybest{\textbf{0.495}} & \policybest{\underline{0.613}} & \policybest{\textbf{0.495}} & \policybest{\underline{0.613}} & \policybest{\textbf{0.495}} \\
\bottomrule
\end{tabular}
}
\end{table*}

\begin{table*}[t]
\centering
\caption{Synthetic drift benchmark results at horizon 1. Lower is better.}
\label{tab:synthetic-main}
\small
\resizebox{\textwidth}{!}{
\begin{tabular}{l*{7}{cc}}
\toprule
Model
& \multicolumn{2}{c}{L-Grad.}
& \multicolumn{2}{c}{L-Grad. noisy}
& \multicolumn{2}{c}{Rec. seas.}
& \multicolumn{2}{c}{Rec. seas. noisy}
& \multicolumn{2}{c}{Regime}
& \multicolumn{2}{c}{Regime noisy}
& \multicolumn{2}{c}{Selective stress} \\
\cmidrule(lr){2-3}\cmidrule(lr){4-5}\cmidrule(lr){6-7}\cmidrule(lr){8-9}
\cmidrule(lr){10-11}\cmidrule(lr){12-13}\cmidrule(lr){14-15}
& MSE & MAE & MSE & MAE & MSE & MAE & MSE & MAE
& MSE & MAE & MSE & MAE & MSE & MAE \\
\midrule
SiMBA & 1.945 & 1.141 & 1.668 & 1.070 & 0.476 & 0.434 & 0.616 & 0.583 & 1.912 & 1.100 & 1.715 & 1.071 & 1.283 & 0.669 \\
Black-Mamba & 0.519 & 0.553 & 0.768 & 0.639 & 0.365 & 0.401 & 0.536 & 0.531 & 0.564 & 0.579 & 0.690 & 0.610 & 1.033 & 0.656 \\
Cont. update & 0.517 & 0.551 & 0.765 & 0.637 & 0.362 & 0.399 & 0.534 & 0.525 & 0.562 & 0.577 & 0.688 & 0.609 & 1.034 & 0.655 \\
\bottomrule
\end{tabular}
}
\end{table*}

\subsection{Selective Updates and the Theoretical Trade-off}
\label{sec:selective-updates-theory}

The theory suggests that inference-time adaptation should not be treated as a binary choice between a frozen model and continuous learning. Continuous updates are optimal only under the implicit assumption that each new error provides reliable evidence about the current regime. Black-Mamba relaxes this assumption by accumulating surprisal with leak and committing an update only after persistent evidence crosses a threshold. The empirical prediction is therefore not that Black-Mamba must always beat continuous updating in raw loss, but that it should approximate its adaptation benefit with fewer writes and greater resistance to transient or noisy drift signals.

The results match this prediction. Across the seven synthetic streams, Black-Mamba uses only $52.19\%$ of the continuous-update budget, saving $1{,}506{,}107$ updates out of $3{,}149{,}986$, while losing only a fraction of a percent in average error relative to continuous updating. It captures $98.9\%$ of the MSE improvement obtained by continuous updating over frozen SiMBA. This is the main empirical support for the proposed update policy: the benefit of online adaptation is largely preserved, but the number of inference-time parameter writes is almost halved.

The segment-level diagnostics further support the mechanism. Black-Mamba does not simply update at a uniformly lower rate; its update rate rises where the stream becomes harder. Across synthetic segments, the correlation between segment MSE and Black-Mamba update rate is $+0.819$, and the hardest segment quintile receives $1.87\times$ more updates than the easiest quintile. This is the behavior expected from leaky surprisal accumulation: isolated errors decay, while persistent mismatch accumulates into an update event.

\begin{figure*}[t]
\centering
\begin{minipage}{0.49\textwidth}
\centering
\includegraphics[width=\linewidth]{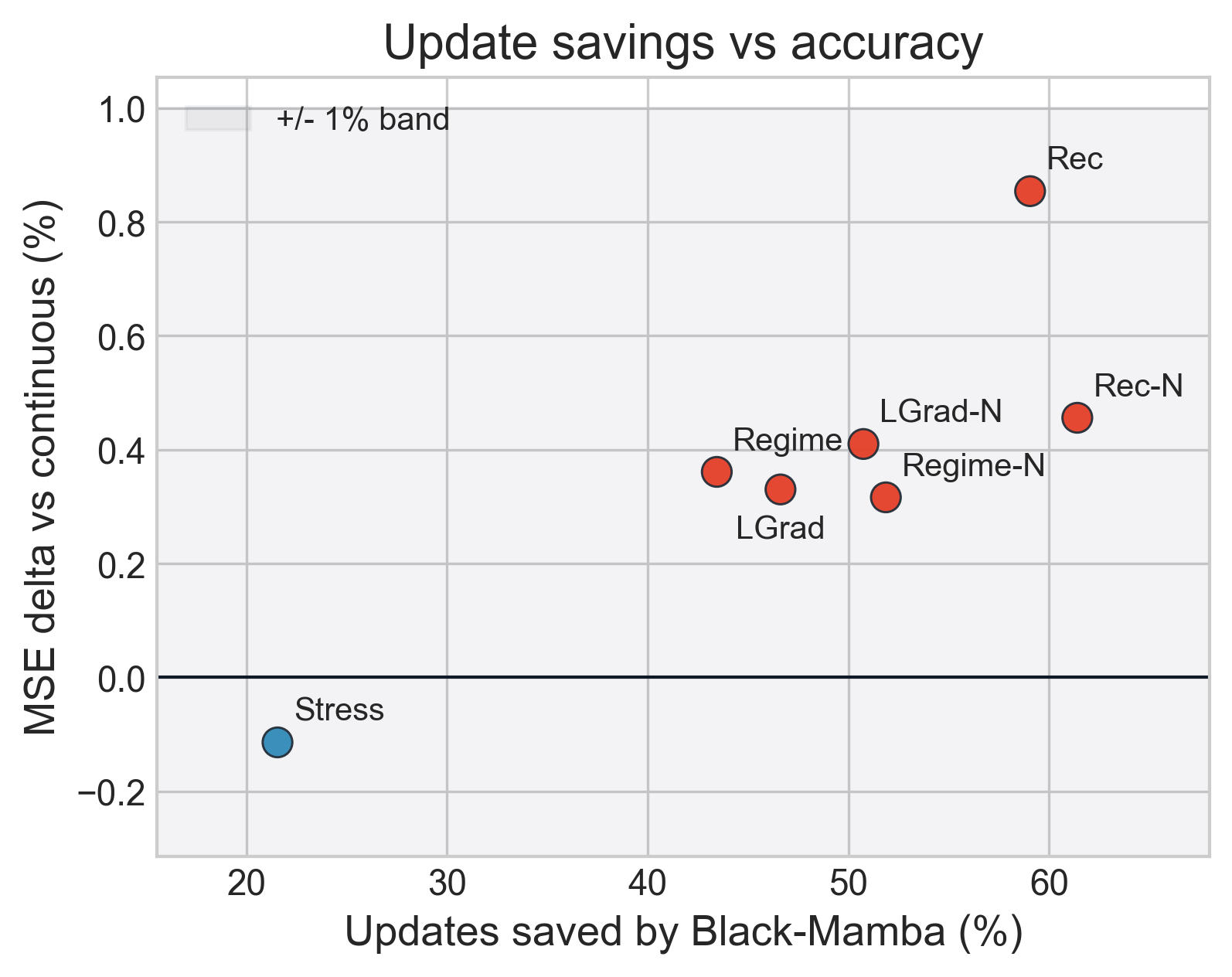}
\end{minipage}
\hfill
\begin{minipage}{0.49\textwidth}
\centering
\includegraphics[width=\linewidth]{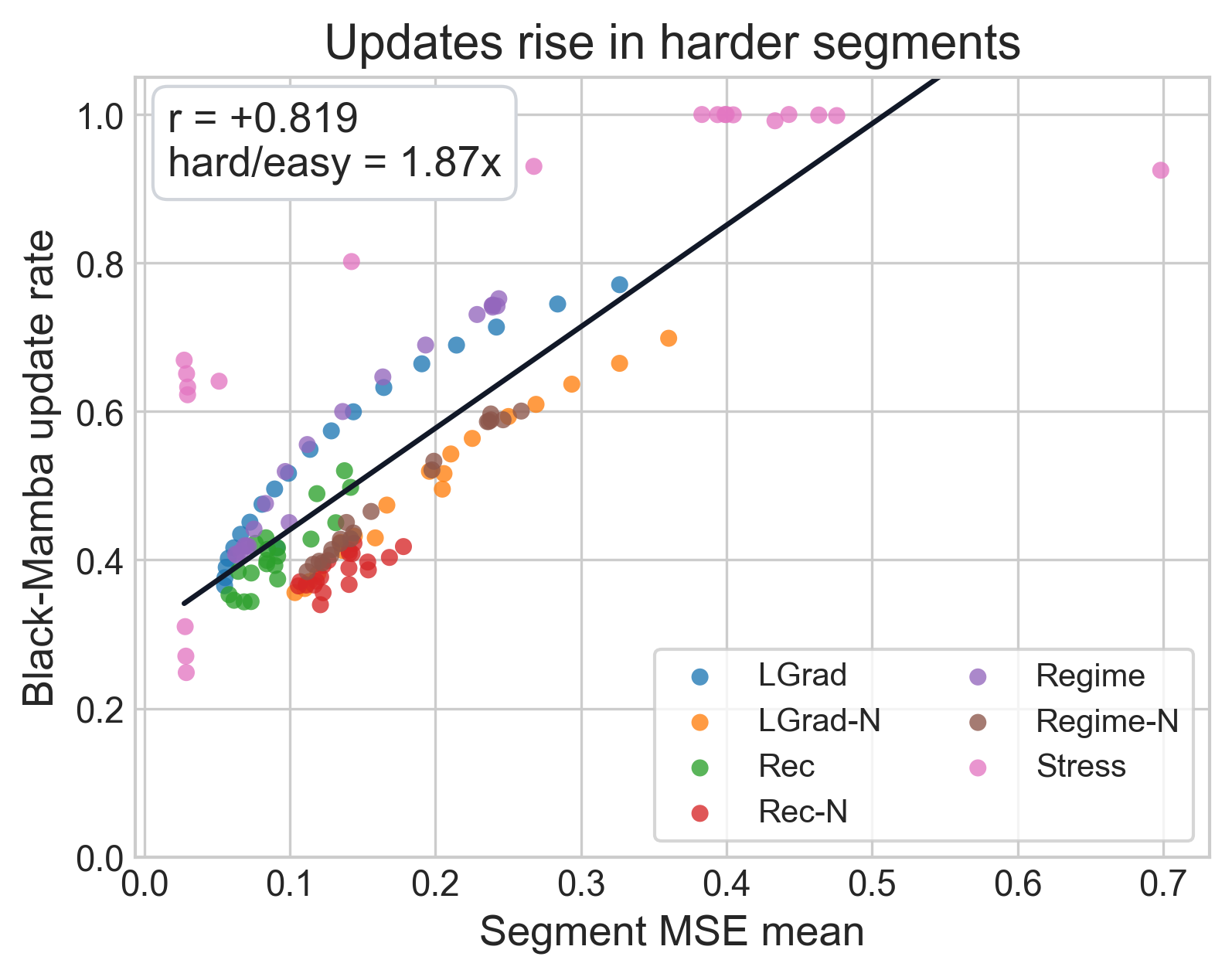}
\end{minipage}
\caption{Synthetic trace dynamics. Left: Black-Mamba achieves large update savings with only marginal MSE changes relative to the continuous update adapter. Right: Black-Mamba update rate increases with segment difficulty, indicating selective adaptation rather than uniformly weaker adaptation.}
\label{fig:update-efficiency}
\end{figure*}

\subsection{Real Drift Case Study}
\label{sec:real-drift-case-study}

Exchange Rate is the most informative real benchmark because it is the strongest real-drift dataset in our diagnostics. Table~\ref{tab:exchange-policy-main} therefore reports the adaptive-policy comparison by horizon rather than only as an aggregate. Black-Mamba updates only $7.5$--$10.4\%$ of online steps on Exchange Rate, whereas the continuous-update adapter writes at every step. At horizons 96, 192, and 336, the differences are small and not statistically significant. At horizon 720, however, continuous updating is worse than Black-Mamba by $+1.797\%$ MSE and $+0.556\%$ MAE.

This result is important because it is exactly the failure mode targeted by Black-Mamba. Long-horizon Exchange Rate forecasting exposes the model to persistent distribution shift, but also to noisy local innovations that should not all be written into memory. Continuous updating reacts to every step, making it more vulnerable to over-adaptation on the test stream. Black-Mamba instead requires accumulated evidence before committing an update, and in this real-drift long-horizon case it avoids the degradation observed under continuous writing while using roughly one tenth of the update rate.

Traffic provides a useful contrast and is reported fully in the appendix. Although some paired tests detect a consistent advantage for continuous updating on Traffic, the effect size is microscopic, on the order of $10^{-3}\%$ or less. We therefore interpret Traffic as practical equivalence rather than meaningful superiority of continuous adaptation. Together, Exchange Rate and Traffic support a conservative real-data conclusion: Black-Mamba is at least practically equivalent to continuous updating on real drift benchmarks, and in the strongest long-horizon drift case it is measurably better.

\begin{table}[t]
\centering
\caption{Per-horizon adaptive-policy comparison on Exchange Rate, the strongest real-drift benchmark. $\Delta=100\cdot(\mathrm{Cont.}-\mathrm{BM})/\mathrm{BM}$; positive values favor Black-Mamba. $p_{\mathrm{dir}}$ is the HAC one-sided p-value in the direction of the observed mean difference.}
\label{tab:exchange-policy-main}
\small
\begin{tabular}{llrrrrr}
\toprule
Dataset & Horizon & BM update rate & $\Delta$ MSE & $p_{\mathrm{dir}}$ & $\Delta$ MAE & $p_{\mathrm{dir}}$ \\
\midrule
\multirow{4}{*}{Exchange Rate}
& 96  & 7.5\%  & -0.685\% & 0.370 & -0.599\% & 0.270 \\
& 192 & 10.4\% & +0.087\% & 0.420 & -0.160\% & 0.208 \\
& 336 & 9.9\%  & -0.926\% & 0.283 & -0.286\% & 0.371 \\
& 720 & 9.1\%  & +1.797\% & 0.056 & +0.556\% & 0.039 \\
\bottomrule
\end{tabular}
\end{table}

\subsection{Statistical Evidence}
\label{sec:statistical-evidence}

All adaptive comparisons are paired on matched online forecasting windows. For real multi-step horizons, we use one-sided HAC/Newey--West tests on paired loss differences, setting the lag to $H-1$ to account for overlapping $H$-step target windows. Moving-block bootstrap intervals are reported in the appendix as robustness checks. The main text reports only the p-values needed for the central real-drift claim.

On Exchange Rate at horizon 720, the degradation of continuous updating relative to Black-Mamba is statistically significant for MAE ($p=0.039$) and borderline for MSE ($p=0.056$). Other Exchange Rate horizons are not significant. These results are consistent with the theory: selective updating is not expected to dominate continuous updating everywhere, but it should reduce unnecessary writes and avoid degradation when instantaneous errors are a noisy proxy for persistent drift. Full per-horizon results for Exchange Rate, Traffic, and synthetic streams are provided in Appendix~\ref{app:additional-results}.

\section{Conclusion}
\label{sec:conclusion}

Black-Mamba reframes inference-time adaptation as selective evidence-gated memory commitment. Rather than updating continuously from instantaneous prediction error, it accumulates surprisal with leak and writes to a lightweight LoRA head adapter only when persistent evidence supports an update. The empirical results support the intended trade-off: Black-Mamba preserves practical accuracy while reducing update frequency, and in the strongest real-drift long-horizon case it avoids the degradation observed under continuous updating. On streams without measurable drift, Black-Mamba leaves the backbone performance essentially unchanged, as desired, which is important since adaptation should not pay an accuracy cost when it is unnecessary. When drift is present, the comparison with the continuous-update competitor shows the intended difference: continuous updating can extract small marginal gains in dense synthetic drift, but Black-Mamba preserves nearly the same practical accuracy with far fewer memory writes and avoids degradation in the strongest real-drift long-horizon case. The method is intentionally modular: because adaptation is implemented as a LoRA adapter on a frozen forecasting head, the update policy can be transferred to other backbones with minimal architectural changes.

\paragraph{Limitations and broader impact.}
The main limitation of this study is the limited availability of real forecasting benchmarks that exhibit measurable concept drift and in which the effect of online adaptation is large enough to be clearly observed. This is visible in our experiments: the benefits of adaptation are most evident on the synthetic drift streams, whereas the real datasets provide fewer cases with strong, measurable drift. At the same time, synthetic data can favor continuous updating, because the drift process is controlled, dense, and deliberately constructed to be learnable from the stream. This makes the practical difference between selective and continuous adaptation harder to isolate on real-world benchmarks. Nevertheless, the combination of theoretical analysis, synthetic drift experiments, and the real-drift evidence on Exchange Rate and Traffic supports the central claim: accumulated surprisal can act as a principled criterion for deciding when inference-time learning should occur. Future work should therefore expand the evaluation to domains where concept drift is both frequent and measurable, such as network traffic, information retrieval, recommender systems, industrial monitoring, clinical or wearable sensing, etc. A complementary direction is to collect or construct new streaming benchmarks specifically designed to evaluate learning under concept drift. The positive impact of Black-Mamba is a more efficient and general inference-time adaptation mechanism: in the limiting case where the trigger always fires, the proposed framework recovers the continuous-update behavior of Titans-inspired neural-memory models, while in the selective regime it reduces unnecessary updates. The principle is not tied to forecasting or to the LoRA implementation used here; LoRA provides a lightweight and immediately usable instantiation, but the same evidence-gated update rule can be applied to other learning-at-inference-time architectures whenever online plasticity must be controlled.

\newpage
\bibliographystyle{plainnat}
\bibliography{references}


\newpage
\appendix

\section{Broader Context for Memory-Augmented Forecasting}
\label{app:context}

The argument developed in the main paper is deliberately narrow: it focuses on the problem of \emph{when} a test-time memory should be updated under non-stationary conditions. This appendix places that argument in a broader modeling landscape, because the usefulness of an event-triggered update policy depends on how one positions the proposal relative to the current architecture literature on time series, sequence modeling, and adaptive inference.

A first observation is that adaptive memory is not the only way in which recent work has attempted to address the difficulty of non-stationary or long-range forecasting. A substantial portion of the literature argues that the central issue is architectural misalignment between standard sequence models and the inductive structure of time series. PatchTST, for example, shows that when time series are segmented into subseries-level patches and channels are modeled independently, Transformer-style models recover strong performance in long-term forecasting \citep{2023_timeseriesworth_nie}. The significance of this result is conceptual as much as empirical: it suggests that part of the challenge attributed to attention may actually be a challenge of tokenization and representation. In a similar spirit, SiMBA proposes that state-space sequence modeling combined with explicit channel mixing can narrow the gap between subquadratic models and strong attention-based baselines \citep{2024_simbasimplifiedmambabased_patro}. These works imply that any theory of online adaptation must remain compatible with the possibility that part of apparent drift is simply poorly modeled structure.

The literature becomes even more instructive in the multivariate case. Typhon, LETO, and Hydra all emphasize that multivariate time series are not just longer univariate sequences, but objects with two coupled axes: temporal dynamics and cross-variate interactions \citep{2025_effectivelydesigning2dimensional_cao,2025_letomodelingmultivariate_behrouz,2025_hydradualexponentiated_meskin}. Their common claim is that forecasting models underperform when they either erase temporal inductive bias or assume too naively that cross-variate structure should always be mixed in the same way. What matters for the present paper is that these models also report long-horizon error propagation in purely reactive recurrent systems. This observation supports our central concern: when the environment changes over time, unrestricted online updates can turn adaptation into instability.

From this point of view, the recent memory-augmented literature can be interpreted as a second, complementary response. Titans introduces a neural long-term memory that is explicitly written at test time using surprise-driven updates \citep{2024_titanslearningmemorize_behrouz}. Test-time training layers generalize this idea by making the hidden state itself a trainable object whose parameters are updated during inference \citep{2025_learninglearntest_sun}. ATLAS, Memory Caching, and related developments then argue that once one accepts the premise of inference-time adaptation, the next challenge is to make the memory larger, better optimized, or less myopic \citep{2025_atlaslearningoptimally_behrouz,2026_memorycachingrnns_behrouz}. In this family of models, the central difficulty is not only how to represent long context, but how to \emph{manage} it online.

At the same time, the theoretical interpretation of these architectures remains unsettled. Some authors describe them literally as systems that memorize context at test time, whereas others argue that at least part of their behavior can be reduced to learned forms of linear attention or recurrent optimization \citep{2026_testtimetrainingkv_liu,2025_expressivenesssoftmaxattention_mongaras,2025_transformersintrinsicoptimizers_ren}. Hybrid architectures such as Mamba--Transformer combinations or sequence-level switching systems make this ambiguity even more visible by showing that the practical boundary between attention, recurrence, and memory is increasingly fluid \citep{2025_understandingenhancingmambatransformer_lee,2026_transmambasequencelevelhybrid_li}. For our purposes, this ambiguity is not a problem. The event-triggering principle proposed in the main paper does not depend on whether a model is ultimately interpreted as memorization or as implicit attention. It depends only on the claim that some internal state is adapted online, and that such adaptation should be selective rather than indiscriminate.

Finally, there is a broader systems-level discussion about whether continuously updated neural memory is sufficient for stable long-term retention. Cognitive work on memory-augmented language models emphasizes episodic segmentation, selective encoding, and competitive retrieval \citep{2025_largelanguagemodels_donga}. Engineering systems such as retrofitted external memory or parametric memory banks provide different operational responses \citep{2026_prometheusmindretrofitting_wind,2026_movemixturevalue_li}. A stronger critique goes further and argues that shared continuous parameter memory may face a fundamental interference barrier under dense episodic storage, which would make discrete addressing unavoidable in some regimes \citep{2026_attentionnotretention_zahn}. Although our setting is forecasting rather than factual episodic memory, this discussion is still relevant. It reinforces the idea that memory should not be written eagerly; once memory is treated as a limited and interference-prone resource, the question of when to update becomes a first-order modeling issue rather than a secondary implementation detail.

\section{Detailed Problem Setting}
\label{app:problem}

This section expands the formal framing used in the main paper. The objective is not simply to restate notation, but to make precise why the forecasting problem considered here is better interpreted as one of tracking an evolving conditional law than of estimating a single static function.

Let $\mathcal{T} \subseteq \mathbb{N}$ denote time. Time is not treated as a random variable, but as the deterministic index along which the environment evolves. At each instant $t \in \mathcal{T}$, the learner has access to information $\mathcal{I}(t)$ and wishes to predict the next target value $Y(t+1)$. In a univariate autoregressive formulation, $\mathcal{I}(t)$ may coincide with the recent history of the same process,
\[
\mathcal{I}(t)=\mathcal{H}_t := (Y(1),\dots,Y(t-1)).
\]
In a multivariate formulation, $\mathcal{I}(t)$ can include lagged observations, exogenous variables, calendar features, metadata, or any other side information. These two formulations look different at the level of representation, but conceptually they are variants of the same conditional prediction problem.

The key modeling decision is to define the next-step law as
\[
Y(t+1)\sim P_t,
\qquad
P_t:=P(\cdot \mid \mathcal{I}(t)).
\]
This immediately induces a trajectory of conditional distributions $t \mapsto P_t$. Stationarity corresponds to the special case where this map is constant. In the present work we focus instead on the more realistic non-stationary regime in which $P_t$ changes over time. One may then introduce a latent regime state $z(t)$ and write
\[
P_t = P(\cdot \mid \mathcal{I}(t), z(t)),
\qquad
z(t+1)=F(z(t))+\eta(t),
\]
where $F$ captures structured regime evolution and $\eta(t)$ denotes unpredictable perturbations. This formulation is useful because it turns non-stationarity into a structured problem: the environment drifts because an underlying regime evolves, not because the data are assumed to be arbitrarily non-stationary.

An important refinement concerns the distinction between \emph{explained variation} and \emph{unresolved drift}. In forecasting practice, many temporal effects are perfectly genuine but should not be treated as drift if they are already encoded in the observed inputs. For example, daily or weekly seasonality may disappear once hour-of-day or day-of-week are included among the covariates. In that case the conditional law is stable after conditioning, even though the raw sequence is strongly time-dependent. By contrast, if the environment changes in ways that remain unobserved after conditioning on $\mathcal{I}(t)$, then the learner faces genuine residual drift. The present paper is explicitly about this latter case.

To describe the temporal change of the predictive law, the main text introduced the formal increment
\[
\Delta(t)=P_{t+1}-P_t.
\]
Of course, this notation is symbolic: in practice one measures the magnitude of drift through a discrepancy on probability measures. Common choices include total variation,
\[
d_{\mathrm{TV}}(\mu,\nu)=\sup_{A \in \mathcal{B}(\mathcal{Y})} |\mu(A)-\nu(A)|,
\]
Kullback--Leibler divergence,
\[
D_{\mathrm{KL}}(\mu\|\nu)=\int \log\!\left(\frac{d\mu}{d\nu}\right)d\mu,
\]
and the $p$-Wasserstein distance,
\[
W_p(\mu,\nu)
=
\left(
\inf_{\pi \in \Pi(\mu,\nu)}
\int \rho(y,y')^p\, d\pi(y,y')
\right)^{1/p}.
\]
The choice among these is application-dependent. Total variation captures worst-case disagreement on measurable events, KL divergence emphasizes directional likelihood mismatch, and Wasserstein distance is often more natural when the output space has an intrinsic geometry.

The most important conceptual step, however, is not the metric choice but the decomposition of drift into a predictable and an innovation component. Let $\mathcal{H}_t$ denote the observable filtration. Then one may write
\[
\Delta(t)=\Delta_{\mathrm{pred}}(t)+\Delta_{\mathrm{noise}}(t),
\qquad
\Delta_{\mathrm{pred}}(t):=\E[\Delta(t)\mid \mathcal{H}_t],
\]
with
\[
\E[\Delta_{\mathrm{noise}}(t)\mid \mathcal{H}_t]=0.
\]
This decomposition should be understood as the forecasting analogue of the predictable--martingale split from stochastic-process theory. Its significance is direct: adaptation is justified only to the extent that the learner can extract and track the predictable component. The residual innovation should not be memorized, because by definition it has no stable conditional mean structure available from the observable past.

\section{Formal Theoretical Foundations of the Problem Formulation}
\label{app:theorems}

The main paper only uses the minimum amount of formal machinery needed to motivate the architecture. This section gives the more complete theoretical backbone that originally motivated the formulation. The goal is twofold. First, it shows that the distinction between predictable drift and innovation noise is not heuristic, but follows from standard results in probability theory. Second, it clarifies the precise sense in which temporal integration can improve the identifiability of structured drift.

\subsection{Conditional expectation as the predictable component}

The first relevant fact is that conditional expectation is the optimal predictor, in mean-square sense, of any square-integrable random variable given the available information.

\begin{theorem}[Conditional expectation as an $L^2$ projection]
\label{thm:l2_projection}
Let $(\Omega,\mathcal{F},\Prob)$ be a probability space, let $\mathcal{G}\subseteq\mathcal{F}$ be a sub-$\sigma$-algebra, and let $U\in L^2(\Omega,\mathcal{F},\Prob)$. Then $\E[U\mid \mathcal{G}]$ is the unique element of $L^2(\Omega,\mathcal{G},\Prob)$ such that
\[
\E\!\left[(U-\E[U\mid\mathcal{G}])V\right]=0
\qquad
\text{for all } V\in L^2(\Omega,\mathcal{G},\Prob),
\]
and equivalently
\[
\E[U\mid\mathcal{G}]
=
\arg\min_{V\in L^2(\Omega,\mathcal{G},\Prob)}
\E[(U-V)^2].
\]
\end{theorem}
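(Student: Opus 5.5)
We will derive the result from the standard construction of conditional expectation for integrable variables, since $U\in L^2\subseteq L^1$ guarantees that $\E[U\mid\mathcal{G}]$ exists. Throughout, $L^2(\Omega,\mathcal{G},\Prob)$ is regarded as a closed linear subspace of the Hilbert space $L^2(\Omega,\mathcal{F},\Prob)$. It is a subspace because $\mathcal{G}$-measurable square-integrable functions form a vector space. It is closed because every $L^2$-convergent sequence has an almost surely convergent subsequence, and the almost sure limit of $\mathcal{G}$-measurable functions admits a $\mathcal{G}$-measurable version.

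\textbf{Step 1: $\E[U\mid\mathcal{G}]$ lies in $L^2(\mathcal{G})$.} By the conditional Jensen inequality, $(\E[U\mid\mathcal{G}])^2\le \E[U^2\mid\mathcal{G}]$ almost surely. Taking expectations gives $\E[(\E[U\mid\mathcal{G}])^2]\le \E[U^2]<\infty$.

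\textbf{Step 2: orthogonality.} Fix $V\in L^2(\mathcal{G})$. The product $UV$ is integrable by Cauchy--Schwarz, so we may take conditional expectations. By the ``taking out what is known'' property and the tower property,
\[
\E[UV]=\E\bigl[\E[UV\mid\mathcal{G}]\bigr]=\E\bigl[V\,\E[U\mid\mathcal{G}]\bigr].
\]
Hence $\E[(U-\E[U\mid\mathcal{G}])V]=0$ for every such $V$.

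The main technical point is justifying the step $\E[UV\mid\mathcal{G}]=V\,\E[U\mid\mathcal{G}]$ when $V$ is unbounded. We handle this by truncation. Set $V_n:=V\mathbf{1}_{\{|V|\le n\}}$, which is bounded and $\mathcal{G}$-measurable, so the identity holds for $V_n$ by the defining property of conditional expectation. We then let $n\to\infty$ using dominated convergence. The dominating functions are $|UV|$ for the left-hand side and $|V|\,|\E[U\mid\mathcal{G}]|$ for the right-hand side, and both are integrable by Cauchy--Schwarz together with Step 1.

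\textbf{Step 3: uniqueness.} Suppose $W_1,W_2\in L^2(\mathcal{G})$ both satisfy the orthogonality relation. Subtracting the two relations gives $\E[(W_1-W_2)V]=0$ for all $V\in L^2(\mathcal{G})$. Taking $V=W_1-W_2$, which lies in $L^2(\mathcal{G})$, yields $\E[(W_1-W_2)^2]=0$. Therefore $W_1=W_2$ almost surely.

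\textbf{Step 4: the minimization characterization.} Write $W:=\E[U\mid\mathcal{G}]$ and let $V\in L^2(\mathcal{G})$ be arbitrary. Expanding the square,
\[
\E[(U-V)^2]=\E[(U-W)^2]+2\,\E[(U-W)(W-V)]+\E[(W-V)^2].
\]
The cross term vanishes by Step 2, because $W-V\in L^2(\mathcal{G})$. This gives the Pythagorean identity
\[
\E[(U-V)^2]=\E[(U-W)^2]+\E[(W-V)^2].
\]
Consequently $W$ attains the minimum of $\E[(U-V)^2]$ over $L^2(\mathcal{G})$. Any other minimizer $V$ must satisfy $\E[(W-V)^2]=0$, so $V=W$ almost surely and the argmin is unique.

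Conversely, suppose some $V^\ast\in L^2(\mathcal{G})$ minimizes $\E[(U-V)^2]$. By the Pythagorean identity, $V^\ast=W$ almost surely, and $W$ satisfies the orthogonality relation by Step 2. Thus the minimization characterization and the orthogonality characterization describe the same element, which establishes the claimed equivalence.
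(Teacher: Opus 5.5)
Your proof is correct. The paper gives no proof of this theorem: it quotes it as a standard fact of probability theory, and the appendix supplies derivations only for the propositions on filtered innovations and false-trigger control.

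The natural comparison is therefore with the usual textbook route, which runs in the opposite direction to yours:
\begin{itemize}
\item \emph{Textbook route.} One regards $L^2(\Omega,\mathcal{G},\Prob)$ as a closed subspace of $L^2(\Omega,\mathcal{F},\Prob)$ and takes the orthogonal projection of $U$ given by the Hilbert projection theorem. That single theorem delivers existence, orthogonality, the minimization property and uniqueness together. One then checks that the projection satisfies the defining property $\E[U\mathbf{1}_G]=\E[\E[U\mid\mathcal{G}]\mathbf{1}_G]$ by testing orthogonality against $V=\mathbf{1}_G$. What this buys is a construction of conditional expectation without Radon--Nikodym, which is then extended to $L^1$.
\item \emph{Your route.} You take existence from the $L^1$ theory and verify orthogonality directly, with the truncation argument and the Pythagorean identity doing the work. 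You never invoke completeness or the projection theorem, so your opening paragraph on closedness of $L^2(\mathcal{G})$ is never used and can be dropped. What it buys is that everything beyond the existence of $\E[U\mid\mathcal{G}]$ is elementary and fully explicit.
\end{itemize}

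Two small precision points:
\begin{itemize}
\item The identity $\E[UV_n]=\E[V_n\,\E[U\mid\mathcal{G}]]$ for bounded $V_n$ does not follow from the defining property in one step. You first extend from indicators to simple functions by linearity, then to bounded $\mathcal{G}$-measurable functions by bounded convergence.
\item Your truncation establishes the unconditional identity $\E[UV]=\E[V\,\E[U\mid\mathcal{G}]]$, not the conditional identity $\E[UV\mid\mathcal{G}]=V\,\E[U\mid\mathcal{G}]$ that you announce. The unconditional one is all Step~2 needs, so nothing is lost.
\end{itemize}
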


In our setting, this theorem justifies the definition of the predictable component of drift. If $\Delta(t)$ denotes the drift variable and $\mathcal{H}_t$ the observable history, then $\E[\Delta(t)\mid \mathcal{H}_t]$ is the best mean-square summary of the drift available from past information. In other words, if there is any component of drift that is in principle learnable from the history, it is represented by that conditional expectation. This is why we define
\[
\Delta_{\mathrm{pred}}(t):=\E[\Delta(t)\mid \mathcal{H}_t],
\qquad
\Delta_{\mathrm{noise}}(t):=\Delta(t)-\E[\Delta(t)\mid \mathcal{H}_t].
\]
The residual is not merely an error term; it is orthogonal to every square-integrable function of the available history.

\subsection{Predictable dynamics and innovation structure}

The decomposition above is also consistent with the classical structure theorem for adapted processes.

\begin{theorem}[Doob decomposition]
\label{thm:doob}
Let $(X_t)_{t\ge 0}$ be an integrable adapted process with respect to a filtration $(\mathcal{F}_t)_{t\ge 0}$. Then there exist a predictable process $(A_t)_{t\ge 0}$ with $A_0=0$ and a martingale $(M_t)_{t\ge 0}$ with $M_0=X_0$ such that
\[
X_t = M_t + A_t
\qquad
\text{for all } t\ge 0.
\]
This decomposition is unique up to indistinguishability.
\end{theorem}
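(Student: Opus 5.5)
The plan is to build the decomposition explicitly from the one-step conditional increments of $X$, check that the pieces have the required properties, and then get uniqueness from the fact that a predictable martingale is constant. Throughout, for $t\ge 1$, let $D_t := X_t - X_{t-1}$.

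\textbf{Step 1: construction.} Since each $X_t$ is integrable, $D_t$ is integrable, so $\E[D_t\mid\mathcal{F}_{t-1}]$ is well defined. Set $A_0:=0$ and $M_0:=X_0$, and for $t\ge 1$ define
\[
A_t := \sum_{k=1}^{t} \E[D_k \mid \mathcal{F}_{k-1}],
\qquad
M_t := X_0 + \sum_{k=1}^{t} \bigl(D_k - \E[D_k\mid\mathcal{F}_{k-1}]\bigr).
\]
Adding the two sums telescopes to $X_t$, so $X_t = M_t + A_t$ by construction. This is the discrete-time analogue of splitting $\Delta(t)$ into $\Delta_{\mathrm{pred}}(t)$ and $\Delta_{\mathrm{noise}}(t)$, with the conditional expectation playing the same role as in Theorem~\ref{thm:l2_projection}, only in $L^1$ rather than $L^2$.

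\textbf{Step 2: properties.} Each summand of $A_t$ is $\mathcal{F}_{k-1}$-measurable with $k-1\le t-1$, so $A_t$ is $\mathcal{F}_{t-1}$-measurable. Hence $(A_t)$ is predictable. $M_t$ is adapted and integrable because it is a finite sum of integrable, adapted terms. For the martingale property,
\[
\E[M_t - M_{t-1}\mid\mathcal{F}_{t-1}] = \E[D_t\mid\mathcal{F}_{t-1}] - \E[D_t\mid\mathcal{F}_{t-1}] = 0,
\]
which uses that $\E[D_t\mid\mathcal{F}_{t-1}]$ is $\mathcal{F}_{t-1}$-measurable.

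\textbf{Step 3: uniqueness.} Suppose $X = M+A = M'+A'$ are two such decompositions. Then $N := M - M' = A' - A$ is both a martingale and predictable. Predictability makes $N_t$ $\mathcal{F}_{t-1}$-measurable, and combining this with the martingale property gives
\[
N_t = \E[N_t\mid\mathcal{F}_{t-1}] = N_{t-1} \quad\text{a.s.}
\]
Inducting from $N_0 = A'_0 - A_0 = 0$ yields $N_t = 0$ a.s. for every $t$. Because time is discrete and countable, the countably many a.s. equalities combine into a single null set, which gives indistinguishability.

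The only genuinely delicate point is this final step. The identity $N_t = \E[N_t\mid\mathcal{F}_{t-1}]$ holds only almost surely, so one must collect the exceptional sets into one null set rather than treat each $t$ separately. Countability of the index set makes this routine.
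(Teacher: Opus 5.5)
Your proof is correct and complete. It is the standard argument: the compensator is $A_t=\sum_{k\le t}\E[X_k-X_{k-1}\mid\mathcal{F}_{k-1}]$, and uniqueness holds because a predictable martingale with $N_0=0$ must vanish. The paper itself gives no proof; it quotes the theorem as a classical fact and uses it only interpretively.

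One point is worth stating explicitly. Your argument depends on time being discrete, consistent with the paper's $\mathcal{T}\subseteq\mathbb{N}$, and this is used essentially in the step $N_t=\E[N_t\mid\mathcal{F}_{t-1}]$. With a continuous index and $A$ merely predictable, uniqueness fails: for Brownian motion $W$, both $W=W+0$ and $W=0+W$ qualify. This is why the continuous-time Doob--Meyer theorem requires $A$ to be of finite variation.
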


The role of Theorem~\ref{thm:doob} in the present paper is interpretive. We do not require the drift process itself to be modeled explicitly as a martingale plus predictable process in every application, but the theorem explains why this is the natural language for online adaptation. The predictable component is the part that can, in principle, be tracked. The martingale component is the part that can generate volatility in realized errors without carrying stable directional information. A memory-update mechanism that reacts identically to both is therefore misaligned with the statistical structure of the problem.

\subsection{When drift is statistically distinguishable}

To make the problem non-vacuous, one needs more than a decomposition. One also needs the innovation term to be sufficiently controlled so that sustained predictable drift is not drowned by fluctuations.

\begin{assumption}[Bounded conditional variance of innovations]
\label{ass:variance}
There exists a predictable sequence $(\sigma_t^2)_{t\ge 0}$ such that
\[
\E\!\left[\norm{\Delta_{\mathrm{noise}}(t)}^2 \mid \mathcal{H}_t\right]
\le \sigma_t^2
\qquad
\text{almost surely.}
\]
\end{assumption}

Under this assumption, concentration inequalities for martingale sums become relevant. The classical result that best matches our use case is Freedman's inequality.

\begin{theorem}[Freedman's inequality]
\label{thm:freedman}
Let $(M_t,\mathcal{F}_t)_{t\ge 0}$ be a real-valued martingale with $M_0=0$, and let the martingale differences be
\[
\xi_t := M_t - M_{t-1}.
\]
Assume that $|\xi_t|\le b$ almost surely for all $t$, and define the predictable quadratic variation
\[
V_t := \sum_{s=1}^{t}\E[\xi_s^2\mid \mathcal{F}_{s-1}].
\]
Then for all $x,v>0$,
\[
\Prob\!\left(M_t \ge x \ \text{and}\ V_t \le v\right)
\le
\exp\!\left(-\frac{x^2}{2(v+bx)}\right).
\]
\end{theorem}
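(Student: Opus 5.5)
The plan is the standard exponential-supermartingale (Chernoff) argument, refined so that the exponent tracks the predictable quadratic variation $V_t$ rather than the crude bound $tb^2$. Fix $\theta>0$ and set $\phi(u):=e^u-1-u$. The first step is a one-step moment-generating-function bound. For $|\xi_s|\le b$, the function $u\mapsto \phi(u)/u^2$ is nondecreasing, so
\[
e^{\theta\xi_s}\le 1+\theta\xi_s+\frac{\xi_s^2}{b^2}\,\phi(\theta b).
\]
Taking conditional expectations and using $\E[\xi_s\mid\mathcal{F}_{s-1}]=0$ together with $1+x\le e^x$ gives
\[
\E\!\left[e^{\theta\xi_s}\mid \mathcal{F}_{s-1}\right]\le \exp\!\left(\frac{\phi(\theta b)}{b^2}\,\E[\xi_s^2\mid\mathcal{F}_{s-1}]\right).
\]

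The second step builds the exponential supermartingale
\[
Z_t:=\exp\!\left(\theta M_t-\frac{\phi(\theta b)}{b^2}V_t\right).
\]
Because $V_t$ is predictable, $Z_{t-1}$ and the increment of $V$ factor out of the conditional expectation, and the one-step bound yields $\E[Z_t\mid\mathcal{F}_{t-1}]\le Z_{t-1}$. Hence $\E[Z_t]\le Z_0=1$.

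The third step handles the event. On $\{M_t\ge x,\ V_t\le v\}$ we have
\[
Z_t\ge \exp\!\left(\theta x-\frac{\phi(\theta b)}{b^2}\,v\right).
\]
Markov's inequality then gives
\[
\Prob(M_t\ge x,\ V_t\le v)\le \exp\!\left(-\theta x+\frac{v}{b^2}\phi(\theta b)\right).
\]
Note that the statement is at the fixed time $t$, so no optional stopping is needed. For the uniform-in-time version one would apply the same bound at the stopping time $\inf\{s: M_s\ge x\}$, but that is not required here.

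The last step, which I expect to be the only real obstacle, is optimizing over $\theta$ and converting the Bennett-type exponent into the stated Bernstein form. The optimal choice is $\theta=b^{-1}\log(1+bx/v)$, which gives $\exp\!\left(-\frac{v}{b^2}h\!\left(\frac{bx}{v}\right)\right)$ with $h(u)=(1+u)\log(1+u)-u$. To finish I would use the elementary inequality $h(u)\ge \frac{u^2}{2(1+u/3)}$ for $u\ge0$, proved by comparing derivatives at $0$. This produces $\exp\!\left(-\frac{x^2}{2(v+bx/3)}\right)$, which is stronger than the claimed $\exp\!\left(-\frac{x^2}{2(v+bx)}\right)$ since $bx/3\le bx$.

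Alternatively, one can skip the exact optimization. Use $\phi(u)\le \frac{u^2}{2(1-u)}$ for $0<u<1$, take $\theta=x/(v+bx)$, and check directly that the resulting exponent is at most $-\frac{x^2}{2(v+bx)}$. This choice satisfies $\theta b<1$ automatically, so it avoids the $h$-inequality.
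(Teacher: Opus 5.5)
Your proof is correct. The paper gives no proof of this theorem: it quotes it as a classical result and uses it only as a conceptual bridge, so there is no in-paper argument to match step by step.

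The closest argument in the paper is the Azuma--Hoeffding proof of Proposition~\ref{prop:false_trigger}. It runs the same Chernoff/exponential-supermartingale mechanism, but with the variance-blind one-step bound $\E[e^{\theta\xi_s}\mid\mathcal{F}_{s-1}]\le e^{\theta^2 b^2/2}$. Its exponent therefore sees only the deterministic proxy $b^2\sum_k\lambda^{2(t-k)}$. Your one-step bound, via the monotonicity of $\phi(u)/u^2$, is exactly what lets the random predictable variation $V_t$ replace that proxy. Since $V_t$ is $\mathcal{F}_{t-1}$-measurable, the factorization in your supermartingale step is legitimate, and at fixed $t$ no stopping argument is needed, as you say.

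Both of your endings check out:
\begin{itemize}
\item The optimized Bennett exponent $-\frac{v}{b^2}h(bx/v)$, combined with $h(u)\ge u^2/(2(1+u/3))$, yields the sharper constant $bx/3$.
\item The shortcut with $\phi(u)\le u^2/(2(1-u))$ and $\theta=x/(v+bx)$ gives exactly $-x^2/(2(v+bx))$, because $1-\theta b=v/(v+bx)$.
\end{itemize}

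There are two small points to tighten.
\begin{itemize}
\item The inequality $h(u)\ge u^2/(2(1+u/3))$ does not follow from comparing derivatives at $0$ alone. You need three facts: both sides vanish at $0$, their first derivatives vanish at $0$, and $h''(u)=1/(1+u)\ge 27/(3+u)^3$ for all $u\ge 0$. The last is equivalent to $(3+u)^3\ge 27(1+u)$, which holds since $(3+u)^3=27+27u+9u^2+u^3$.
\item You divide by $b$ throughout, so treat $b=0$ separately. In that case $M_t=0$ almost surely, and the probability is $0$ for $x>0$.
\end{itemize}

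Finally, your argument only uses $\xi_s\le b$, which is Freedman's original one-sided hypothesis. So you actually prove slightly more than the stated two-sided version.
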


Theorem~\ref{thm:freedman} is not used in the main paper as a technical tool, but as a conceptual bridge. It says that cumulative innovation has a concentration scale. Therefore, the forecasting problem becomes meaningful exactly when the cumulative predictable component of drift exceeds that scale over time windows relevant to the task. This leads to the following informal non-degeneracy condition: over a window of length $W$, the magnitude of
\[
\sum_{s=t}^{t+W-1}\Delta_{\mathrm{pred}}(s)
\]
should be comparable to or larger than the typical fluctuation size of
\[
\sum_{s=t}^{t+W-1}\Delta_{\mathrm{noise}}(s).
\]
If this condition fails, then no event-triggering mechanism, however sophisticated, can reliably distinguish signal from noise.

\subsection{Temporal integration as asymptotic noise suppression}

The event-triggered architecture in the main paper is based on a scalar surprisal signal $s_t$. We write
\[
s_t = a_t + \xi_t,
\]
where $a_t$ denotes the structured component induced by predictable drift and $\xi_t$ is again a martingale-difference innovation. The basic idea that motivates integration is that averaging suppresses martingale differences.

\begin{theorem}[Strong law for martingale differences]
\label{thm:slln_martingale}
Let $(\xi_t)$ be a martingale-difference sequence with respect to $(\mathcal{H}_t)$ such that
\[
\E[\xi_t\mid \mathcal{H}_{t-1}] = 0,
\qquad
\sup_t \E[\xi_t^2] < \infty.
\]
Then
\[
\frac{1}{t}\sum_{k=1}^{t}\xi_k \to 0
\qquad \text{almost surely.}
\]
\end{theorem}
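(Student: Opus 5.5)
The strong law in Theorem~\ref{thm:slln_martingale} follows from the $L^2$ martingale convergence theorem combined with Kronecker's lemma. The plan is to build a normalized martingale, show that it converges almost surely, and then transfer that convergence to the Cesàro averages $\frac1t\sum_{k\le t}\xi_k$.

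\emph{Step 1 (normalized martingale).} Let $C:=\sup_t \E[\xi_t^2]<\infty$ and define
\[
N_t := \sum_{k=1}^{t} \frac{\xi_k}{k}.
\]
Each summand is $\mathcal{H}_k$-measurable with $\E[\xi_k/k\mid\mathcal{H}_{k-1}]=0$, so $(N_t,\mathcal{H}_t)$ is a martingale. Martingale differences are orthogonal in $L^2$: for $j<k$ we have $\E[\xi_j\xi_k]=\E[\xi_j\,\E[\xi_k\mid\mathcal{H}_{k-1}]]=0$. Therefore
\[
\E[N_t^2]=\sum_{k=1}^t \frac{\E[\xi_k^2]}{k^2}\le C\sum_{k\ge1}\frac{1}{k^2}=\frac{C\pi^2}{6},
\]
and $(N_t)$ is bounded in $L^2$.

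\emph{Step 2 (almost sure convergence).} By Doob's $L^2$ martingale convergence theorem, an $L^2$-bounded martingale converges almost surely (and in $L^2$) to a finite limit $N_\infty$. If I want to avoid quoting this theorem, I can prove it directly. Doob's maximal inequality gives
\[
\Prob\bigl(\sup_{m\le t\le n}|N_t-N_m|\ge\varepsilon\bigr)\le \varepsilon^{-2}\sum_{k=m+1}^{n}\frac{C}{k^2}.
\]
Letting $n\to\infty$ and then $m\to\infty$ shows that $(N_t)$ is almost surely Cauchy. This step is the only genuinely probabilistic ingredient and so the main point of the proof. The subtlety is that the $\xi_k$ are not independent, so Kolmogorov's three-series theorem does not apply directly; the orthogonality of martingale increments is what replaces independence.

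\emph{Step 3 (Kronecker's lemma).} Kronecker's lemma is the deterministic statement that if $\sum_k x_k/b_k$ converges with $0<b_k\uparrow\infty$, then $b_t^{-1}\sum_{k\le t}x_k\to0$. I would include a short proof by Abel summation. With $S_k:=\sum_{j\le k}x_j/b_j$ and $S_0:=0$,
\[
\frac{1}{b_t}\sum_{k=1}^t x_k = S_t-\frac{1}{b_t}\sum_{k=1}^{t-1}(b_{k+1}-b_k)S_k .
\]
The second term is a weighted Cesàro average of the $S_k$ with nonnegative weights summing to $(b_t-b_1)/b_t\to1$, so it tends to the same limit as $S_t$, and the difference vanishes. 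Applying the lemma pathwise on the full-measure event where $N_t$ converges, with $x_k=\xi_k(\omega)$ and $b_k=k$, gives $\frac1t\sum_{k=1}^t\xi_k\to0$ almost surely, which completes the proof.
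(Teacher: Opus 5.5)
The paper does not prove this theorem. It is quoted in Appendix~\ref{app:theorems} as a classical result and serves only to motivate replacing uniform averaging with a leaky integrator, so there is no argument of the paper's to compare yours against.

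Your proof is correct. It is the standard textbook route: build the $L^2$-bounded martingale $\sum_k \xi_k/k$, apply Doob's convergence theorem, then apply Kronecker's lemma pathwise. Its key computation, the vanishing cross terms $\E[\xi_j\xi_k]=0$ for $j<k$, is the same orthogonality the paper uses in the proof of Proposition~\ref{prop:filtered_variance}. This makes concrete the paper's remark that the leaky bound is the \emph{finite-memory analogue} of this averaging result.

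Your argument also shows that the hypothesis actually used is only $\sum_k \E[\xi_k^2]/k^2<\infty$, which is weaker than $\sup_t \E[\xi_t^2]<\infty$. In the paper's own setting, the conditional bound $\E[\xi_t^2\mid\mathcal{H}_{t-1}]\le\sigma^2$ gives $\sup_t\E[\xi_t^2]\le\sigma^2$ by the tower property, so the theorem applies there.

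Three details are worth writing out; none is a gap:
\begin{enumerate}
\item Adaptedness of $\xi_k$ to $\mathcal{H}_k$ is implicit in ``martingale-difference sequence.'' It is what makes $N_t$ a martingale and what justifies $\E[\xi_j\xi_k]=\E[\xi_j\,\E[\xi_k\mid\mathcal{H}_{k-1}]]$, with $\xi_j\xi_k\in L^1$ by Cauchy--Schwarz.
\item In Step~2, the passage from the maximal inequality to almost-sure Cauchy goes through the nonincreasing quantity $\sup_{s,t\ge m}|N_s-N_t|\le 2\sup_{t\ge m}|N_t-N_m|$. Its convergence to $0$ in probability upgrades to almost-sure convergence by monotonicity.
\item In Step~3, the Toeplitz argument needs, besides the weights summing to a limit of $1$, that each fixed weight $(b_{k+1}-b_k)/b_t$ tends to $0$ as $t\to\infty$. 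This holds because $b_t\to\infty$.
\end{enumerate}
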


The practical limitation of Theorem~\ref{thm:slln_martingale} is that uniform averaging is too inertial for drifting environments. If one gives equal weight to all past evidence, adaptation becomes too slow to follow evolving regimes. This is the reason for introducing a leaky integrator instead.

\section{Statistical Rationale for Event-Triggered Updates}
\label{app:theory}

The architecture in the main paper is motivated by a simple but consequential principle: if the update signal combines structured drift and innovation noise, then one should not allow every local fluctuation to write into memory. This appendix expands that argument and clarifies why a leaky integrator is the natural filtering device.

\begin{center}
\fbox{%
\begin{minipage}{0.92\linewidth}
\textbf{Algorithm 1: Evidence-gated test-time adaptation.}

\vspace{0.3em}
\textbf{Input:} model $f_{\theta,\phi}$, loss $\ell$, surprisal encoder $\psi$, accumulator $u=0$, leak $\lambda$, threshold $\tau$, reset factor $\rho$, update operator $\mathcal{U}$.

\vspace{0.3em}
\begin{enumerate}[leftmargin=1.4em,itemsep=0.15em]
    \item At time $t$, predict $\hat{Y}(t+1)=f_{\theta,\phi_t}(\mathcal{I}(t))$.
    \item Observe $Y(t+1)$ and compute $\ell_t=\ell(\hat{Y}(t+1),Y(t+1))$.
    \item Encode surprisal $s_t=\psi(\ell_t)$ and update evidence $u\leftarrow \lambda u+s_t$.
    \item If $u\ge \tau$, commit memory $\phi_{t+1}=\mathcal{U}(\phi_t,\nabla_\phi \ell_t)$ and set $u\leftarrow \rho u$.
    \item Otherwise, keep $\phi_{t+1}=\phi_t$.
\end{enumerate}
\end{minipage}}
\end{center}

Algorithm~1 shows the architecture-independent form of the mechanism. The update operator may be a gradient step, a neural-memory write, a recurrent-state refresh, or another local test-time adaptation rule. The invariant part is the separation between evidence accumulation and memory commitment.

Suppose the model produces a scalar surprisal statistic $s_t$ from the current loss, and suppose this statistic can be decomposed as
\[
s_t=a_t+\xi_t,
\]
where $a_t$ represents the structured component induced by persistent misalignment with the current regime, and $\xi_t$ is an innovation term satisfying
\[
\E[\xi_t\mid \mathcal{H}_{t-1}]=0.
\]
If one updates memory directly from $s_t$, then every realization of $\xi_t$ contributes to plasticity. In a setting with outliers, heteroskedastic noise, or abrupt but non-persistent local fluctuations, this is precisely what one would like to avoid.

The classical remedy is temporal averaging. If $(\xi_t)$ is a martingale-difference sequence with uniformly bounded second moments, then one has
\[
\frac{1}{t}\sum_{k=1}^{t}\xi_k \to 0
\quad \text{a.s.}
\]
as $t \to \infty$. This fact already indicates that repeated averaging suppresses innovation noise. Yet uniform averaging is a poor operational choice for drifting environments because it gives equal weight to stale evidence and recent evidence. Inference-time adaptation must remain local enough to follow evolving regimes. This is why the main paper adopts exponential integration rather than simple running averages.

The leaky accumulator
\[
u_t=\lambda u_{t-1}+s_t,
\qquad 0<\lambda<1,
\]
implements precisely this compromise. By unrolling the recursion one obtains
\[
u_t = \sum_{k=1}^{t}\lambda^{t-k}a_k
     + \sum_{k=1}^{t}\lambda^{t-k}\xi_k.
\]
The first term is the temporally weighted structured signal, while the second is the filtered innovation process. Under the bounded-variance assumption used in the main text, the innovation part remains centered and uniformly controlled. Hence the leaky filter enhances the signal-to-noise ratio whenever the structured component persists over a time horizon comparable to $(1-\lambda)^{-1}$.

The next proposition formalizes the boundedness of the filtered innovation component and is one of the few technical statements in this appendix for which we provide a full derivation, since it directly justifies the architecture.

\begin{proposition}[Variance bound for filtered martingale differences]
\label{prop:filtered_variance}
Let $(\xi_t)$ be a martingale-difference sequence such that
\[
\E[\xi_t^2\mid \mathcal{H}_{t-1}] \le \sigma^2.
\]
Define
\[
w_t := \sum_{k=1}^{t}\lambda^{t-k}\xi_k,
\qquad 0<\lambda<1.
\]
Then
\[
\E[w_t] = 0,
\]
and
\[
\mathrm{Var}(w_t) \le \frac{\sigma^2}{1-\lambda^2}.
\]
\end{proposition}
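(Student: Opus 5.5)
The plan is to exploit linearity of expectation together with the orthogonality of martingale differences. Since $w_t$ is a finite linear combination of the $\xi_k$ with deterministic coefficients $\lambda^{t-k}$, all moment computations reduce to first and second moments of the individual terms and their cross products.

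\emph{Step 1: centering.} For each $k$, the tower property gives
\[
\E[\xi_k] = \E\bigl[\E[\xi_k\mid\mathcal{H}_{k-1}]\bigr] = 0.
\]
Linearity then yields $\E[w_t]=0$. Integrability of $\xi_k$ follows from the conditional second-moment bound, since $\E[\xi_k^2]\le\sigma^2<\infty$. The same argument, applied to the terms with $k\le t-1$ (which are $\mathcal{H}_{t-1}$-measurable) and to the last term, would also recover the conditional statement in Proposition~\ref{prop:filtered-noise}. For the present statement only the unconditional mean is needed.

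\emph{Step 2: orthogonality.} Because the mean is zero, $\mathrm{Var}(w_t)=\E[w_t^2]$. Expanding the square gives diagonal terms $\lambda^{2(t-k)}\E[\xi_k^2]$ and cross terms $\lambda^{2t-j-k}\E[\xi_j\xi_k]$ with $j<k$. For $j<k$, the variable $\xi_j$ is $\mathcal{H}_{k-1}$-measurable, so
\[
\E[\xi_j\xi_k]=\E\bigl[\xi_j\,\E[\xi_k\mid\mathcal{H}_{k-1}]\bigr]=0.
\]
This step is the main technical point. Pulling $\xi_j$ out of the conditional expectation requires $\xi_j\xi_k$ to be integrable, and this follows from Cauchy--Schwarz because both factors lie in $L^2$. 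Alternatively, one can note that this is exactly the orthogonality in Theorem~\ref{thm:l2_projection} with $\mathcal{G}=\mathcal{H}_{k-1}$, $U=\xi_k$ and $V=\xi_j$.

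\emph{Step 3: geometric sum.} Taking expectations of the conditional bound gives $\E[\xi_k^2]\le\sigma^2$. Hence
\[
\mathrm{Var}(w_t)\le\sigma^2\sum_{k=1}^{t}\lambda^{2(t-k)}=\sigma^2\,\frac{1-\lambda^{2t}}{1-\lambda^2}\le\frac{\sigma^2}{1-\lambda^2},
\]
where the geometric series is summable because $0<\lambda<1$. The resulting bound is uniform in $t$, which is the point used in the main text.
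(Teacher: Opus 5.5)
Your proof is correct and follows the paper's argument essentially step for step: the tower property gives the mean, orthogonality of martingale differences kills the cross terms, and the geometric series gives the bound, with the integrability justifications that the paper leaves implicit. One correction to your aside in Step~1: the terms with $k\le t-1$ are $\mathcal{H}_{t-1}$-measurable and so pass through the conditioning unchanged, which gives $\E[w_t\mid\mathcal{H}_{t-1}]=\lambda w_{t-1}$. This is nonzero in general for $t\ge 2$, so the same argument does not recover the conditional claim of Proposition~\ref{prop:filtered-noise}, and that remark should be dropped.
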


\begin{proof}
The mean claim follows directly from linearity and the martingale-difference property:
\[
\E[w_t]
=
\sum_{k=1}^{t}\lambda^{t-k}\E[\xi_k]
= 0.
\]
For the second moment one has
\[
\E[w_t^2]
=
\E\!\left[
\left(\sum_{k=1}^{t}\lambda^{t-k}\xi_k\right)^2
\right].
\]
Cross terms vanish because martingale differences are orthogonal in $L^2$, hence
\[
\E[w_t^2]
=
\sum_{k=1}^{t}\lambda^{2(t-k)}\E[\xi_k^2]
\le
\sigma^2 \sum_{k=1}^{t}\lambda^{2(t-k)}.
\]
The last term is a geometric sum bounded by
\[
\sigma^2 \sum_{j=0}^{\infty}\lambda^{2j}
=
\frac{\sigma^2}{1-\lambda^2}.
\]
Since $w_t$ is centered, this proves the variance bound.
\end{proof}

The proposition is the finite-memory analogue of the averaging argument in Theorem~\ref{thm:slln_martingale}. It shows that leaky integration suppresses innovation without letting its variance grow unbounded over time. In other words, the accumulator stores persistent evidence, not a random walk of noise.

The complementary signal statement is simpler but equally important. If the structured component has a persistent local mean $\mu>0$ over a time scale longer than the effective memory horizon, then
\[
\sum_{k=1}^{t}\lambda^{t-k}a_k
\approx
\frac{\mu}{1-\lambda}.
\]
Thus the structured term accumulates to a non-zero scale while the innovation remains centered and variance-controlled. This asymmetry is the exact reason for introducing a threshold over accumulated surprisal rather than acting on instantaneous loss.

This observation has three practical implications. First, gradual drift becomes visible even when instantaneous losses are individually weak. A sequence of small but coherent deviations may fail to trigger a standard reactive update, yet accumulate under the leaky filter until it crosses threshold. Second, isolated outliers become less influential because their effect decays unless corroborated by subsequent evidence. Third, the threshold parameter $\tau$ acquires a transparent interpretation: it controls how much integrated evidence is required before one is willing to treat the observed deviation as a candidate regime change rather than as noise.

\paragraph{Relation to sequential change detection.}
The trigger can also be interpreted as a lightweight form of sequential evidence accumulation. Classical change-detection procedures accumulate evidence for a distributional change and raise an alarm when the statistic crosses a threshold. Black-Mamba follows the same statistical principle, but the accumulated quantity is a model-internal surprisal signal rather than an explicit likelihood ratio, and the alarm controls an internal memory update rather than declaring an external change point. This is useful when the post-change distribution is unknown or evolves continuously, because the update rule can operate directly on the predictive mismatch already produced by the model.

\begin{proposition}[False-trigger control under calibrated innovation noise]
\label{prop:false_trigger}
Assume that there is no predictable drift after validation calibration, so the trigger input is $s_t=\xi_t$, where $(\xi_t)$ is a martingale-difference sequence with $|\xi_t|\le b$ almost surely. Let
\[
u_t=\lambda u_{t-1}+\xi_t,
\qquad
0<\lambda<1,
\qquad
u_0=0.
\]
Then for any threshold $\tau>0$,
\[
\Prob(u_t\ge \tau)
\le
\exp\!\left(
-\frac{\tau^2(1-\lambda^2)}{2b^2}
\right).
\]
\end{proposition}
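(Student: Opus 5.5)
Since $u_0=0$, unrolling the recursion gives $u_t=\sum_{k=1}^t \lambda^{t-k}\xi_k$. Fix the horizon $t$ and treat this as the terminal value of a martingale with deterministic weights. Set
\[
M_j := \sum_{k=1}^{j} \lambda^{t-k}\xi_k, \qquad j=0,1,\dots,t,
\]
so that $M_0=0$ and $M_t=u_t$. Because the coefficients $c_k:=\lambda^{t-k}$ are deterministic (they depend on the fixed $t$, not on $j$), $(M_j)_{j\le t}$ is a martingale with respect to $(\mathcal{H}_j)$. This follows from the martingale-difference property $\E[\xi_k\mid\mathcal{H}_{k-1}]=0$, exactly as in the mean computation of Proposition~\ref{prop:filtered_variance}. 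Its increments $c_k\xi_k$ satisfy $|c_k\xi_k|\le c_k b$ almost surely.

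Apply the Azuma--Hoeffding inequality for martingales with bounded increments $|M_k-M_{k-1}|\le c_k b$:
\[
\Prob(M_t\ge\tau)\le \exp\!\left(-\frac{\tau^2}{2b^2\sum_{k=1}^t c_k^2}\right).
\]
If one prefers to stay within results stated earlier, Theorem~\ref{thm:freedman} gives a bound of the same shape, but it carries the extra $bx$ term in the denominator. The clean Azuma form is the better tool here. Its proof is short: by the conditional Hoeffding lemma, $\E[e^{\theta c_k\xi_k}\mid\mathcal{H}_{k-1}]\le e^{\theta^2c_k^2b^2/2}$ for a centered variable supported in $[-c_kb,c_kb]$. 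Iterating the tower property yields $\E[e^{\theta M_t}]\le e^{\theta^2b^2\sum c_k^2/2}$. Markov's inequality, optimized at $\theta=\tau/(b^2\sum c_k^2)$, then gives the bound.

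It remains to bound the variance proxy uniformly in $t$. The geometric series already used in Proposition~\ref{prop:filtered_variance} gives
\[
\sum_{k=1}^t\lambda^{2(t-k)}\le \sum_{j\ge0}\lambda^{2j}=\frac{1}{1-\lambda^2}.
\]
Substituting this into the exponent gives $\exp(-\tau^2(1-\lambda^2)/(2b^2))$, as claimed.

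The only step that needs care is the martingale construction. The accumulator $u_t$ itself, viewed as a process in $t$, is \emph{not} a martingale, because the weights change with $t$. The argument therefore has to fix $t$ first and build the auxiliary martingale $(M_j)_{j\le t}$ with frozen weights. Once this is done, the rest is a direct application of the standard Azuma--Hoeffding inequality. Note also that the bound is for a fixed time $t$. It does not control the probability of a crossing at any time up to $t$; that would require a union bound or a maximal inequality. The statement as given only requires the fixed-time version.
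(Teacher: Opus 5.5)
Your proof is correct and follows the same route as the paper: unroll to $u_t=\sum_{k=1}^t\lambda^{t-k}\xi_k$, apply Azuma--Hoeffding with increment bounds $b\lambda^{t-k}$, and dominate the finite geometric sum by $1/(1-\lambda^2)$. Your frozen-weight martingale $(M_j)_{j\le t}$ and the Hoeffding-lemma sketch spell out steps that the paper's proof only calls ``a weighted martingale sum.''
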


\begin{proof}
Unrolling the recursion gives
\[
u_t=\sum_{k=1}^{t}\lambda^{t-k}\xi_k.
\]
This is a weighted martingale sum. Since $|\xi_k|\le b$, the weighted increments are bounded by $b\lambda^{t-k}$. Applying Azuma--Hoeffding gives
\[
\Prob(u_t\ge \tau)
\le
\exp\!\left(
-\frac{\tau^2}{2b^2\sum_{k=1}^{t}\lambda^{2(t-k)}}
\right).
\]
Because
\[
\sum_{k=1}^{t}\lambda^{2(t-k)}
\le
\frac{1}{1-\lambda^2},
\]
the stated bound follows.
\end{proof}

Proposition~\ref{prop:false_trigger} gives a direct false-update interpretation to the trigger: increasing $\tau$ or shortening the effective memory length reduces the probability that pure innovation noise commits a memory update. This complements the variance bound above by controlling threshold crossings, not only the second moment of the filtered noise.

It is also worth clarifying that the event-triggering principle is logically independent of the internal update operator. The main paper instantiates memory updates with a gradient step on $\phi_t$, but the same trigger could gate a recurrent-state refresh, a write into an external memory, a cache update, or a more elaborate optimizer over a neural memory module. What remains invariant is the separation between \emph{evidence accumulation} and \emph{memory commitment}. That separation is, in our view, the central theoretical contribution of the paper.

\section{Neurophysiological Motivation and Computational Analogy}
\label{app:neuro}

The biological motivation for \emph{Black-Mamba} should not be read as a claim of mechanistic realism. Rather, it serves to justify the architectural principle that persistent memory should be more selective than transient error correction. This appendix expands the argument that the proposed event-triggering mechanism is aligned with well-established ideas from the neuroscience of memory.

The starting point is the classical literature on long-term potentiation (LTP). From the earliest experiments of \citet{1973_longlastingpotentiationsynaptic_bliss} through later reviews \citep{2004_longtermpotentiationmemory_lynch,2017_briefhistorylongterm_nicoll}, a stable lesson emerges: durable memory traces are not formed by every perturbation. They require sufficiently strong, repeated, or appropriately coordinated activity to produce persistent changes in synaptic efficacy. This already suggests an important analogy for forecasting under drift. If every local mismatch between prediction and observation led to a durable update of long-term memory, the resulting system would be overly reactive and unstable. Biological memory does not operate in that way.

The synaptic tagging and capture literature sharpens the analogy. According to this view, a local event may set a transient tag that makes a synapse eligible for later consolidation, but persistence requires the arrival of additional plasticity-related resources within a critical time window \citep{2011_makingmemorieslast_redondo,2014_taggingcapturehypothesis_viola}. The conceptual importance of this framework lies in the explicit separation between \emph{eligibility} and \emph{consolidation}. In computational terms, instantaneous surprisal should be interpreted not as an immediate command to rewrite memory, but as an indication that the current state of the system may deserve revision if subsequent evidence supports that interpretation.

This view is reinforced by computational neuroscience. \citet{2021_memoryconsolidationimprovement_luboeinski} show that synaptic tagging and capture mechanisms in recurrent spiking networks do not merely preserve memories; they can improve later recall by stabilizing the right traces over time. This is particularly relevant for our setting because it suggests that selectivity in memory writing is not merely a strategy for reducing variance. It can also improve the functional usefulness of memory by preserving only those traces that survive temporal scrutiny.

The integrate-and-fire family of models provides the final piece of the analogy. In these models, noisy inputs are integrated over time with leak, and discrete events occur only when the resulting latent variable crosses a threshold \citep{2006_reviewintegrateandfireneuron_burkitt,2005_adaptiveexponentialintegrateandfire_brette,2002_spikingneuronmodels_gerstner}. This does not imply that the accumulator in \emph{Black-Mamba} is a literal membrane potential. The analogy is instead structural. The scalar variable $u_t$ is a latent evidence reservoir: local surprises contribute to it, old evidence decays, repeated deviations add constructively, and only sufficiently persistent evidence causes a discrete event. The thresholded update is thus the computational counterpart of selective commitment.

Seen from this perspective, the proposed architecture combines two biologically inspired principles. The first is \emph{tagging}: instantaneous discrepancy creates eligibility for change. The second is \emph{integration}: durable commitment requires accumulation over time. The result is not a biologically detailed memory model, but a computationally disciplined one. That level of abstraction is exactly what is appropriate for the present paper: we do not need to emulate the biophysics of memory in order to import the more general lesson that persistence should be earned, not assumed.

\section{Experimental and Reproducibility Details}
\label{app:experimental-details}

This section reports the experimental and reproducibility details needed to interpret and replicate the evaluation.

\subsection{Datasets, splits, and preprocessing}

We follow the public long-term forecasting benchmark protocol popularized by the PatchTST framework \citep{2023_timeseriesworth_nie}. This choice is deliberate: SiMBA is evaluated on the same benchmark family, and our implementation preserves the same data organization and forecasting protocol in order to make the comparison with the frozen SiMBA backbone and the adaptive variants as direct as possible. For all datasets, we use chronological splits with proportions 0.45/0.10/0.45 for training, validation, and test, respectively. We use this relatively large test split to make the test distribution sufficiently long for measuring whether the test stream exhibits drift relative to the training regime.

The real benchmark suite includes the following datasets. The ETT family contains four electricity-transformer temperature datasets: ETTh1 and ETTh2 are hourly series, while ETTm1 and ETTm2 are 15-minute series, all taken from the public ETDataset repository.\footnote{\url{https://github.com/zhouhaoyi/ETDataset}} Electricity is the UCI Electricity Load Diagrams dataset, containing client-level electricity consumption time series.\footnote{\url{https://archive.ics.uci.edu/ml/datasets/ElectricityLoadDiagrams20112014}} Exchange Rate is the multivariate exchange-rate benchmark distributed in the multivariate time-series data repository.\footnote{\url{https://github.com/laiguokun/multivariate-time-series-data}} Traffic is based on PeMS traffic sensor measurements from California freeway systems.\footnote{\url{https://pems.dot.ca.gov/}} Weather is built from the Jena weather archive.\footnote{\url{https://www.bgc-jena.mpg.de/wetter/}} In our drift-oriented version of Weather, we use observations from 2003-01-01 up to the last date available in our collected copy, 2026-04-07, in order to provide a long temporal span for concept-drift diagnostics. 

Real benchmark experiments use context length 96 and prediction horizons 96, 192, 336, and 720 when available. All real datasets are processed chronologically, without shuffling across time, and the validation split is used for model selection and adapter calibration before streaming evaluation on the held-out test stream.

The synthetic suite contains seven non-stationary streams designed to isolate different forms of concept drift. Each stream is generated as a controlled multivariate process with an underlying regime variable that changes the conditional relationship between recent observations and the next target. The \texttt{lgradual} stream introduces a smooth transition between regimes, while \texttt{lgradual\_noisy} adds observation noise to the same gradual drift structure. The \texttt{recurrent\_seasonal\_boundary} stream alternates between recurring seasonal decision boundaries, and \texttt{recurrent\_seasonal\_boundary\_noisy} adds noise to make boundary recovery less stable. The \texttt{regime\_plateau\_drift} stream contains extended plateaus separated by regime changes, with \texttt{regime\_plateau\_drift\_noisy} adding noise to the plateau-drift process. Finally, \texttt{selective\_update\_stress} is designed to stress the update policy: long easy regions require few updates, while later difficult regions require rapid adaptation. Synthetic forecasting uses horizon 1 and paired online traces of 449,999 windows per adaptive run.

Following the PatchTST evaluation framework \citep{2023_timeseriesworth_nie}, which is also the benchmark protocol used to evaluate SiMBA, we report losses in the standardized target space in order to make results comparable across datasets and target channels with different physical units and scales. This is the metric convention used throughout our comparison. Let $\mu_c^{\mathrm{train}}$ and $\sigma_c^{\mathrm{train}}$ denote the mean and standard deviation of target channel $c$, computed on the training split only. For a target value $y_{t,h,c}$ and prediction $\hat{y}_{t,h,c}$ at window $t$, horizon step $h$, and channel $c$, the standardized quantities are
\[
\tilde{y}_{t,h,c}
=
\frac{y_{t,h,c}-\mu_c^{\mathrm{train}}}{\sigma_c^{\mathrm{train}}},
\qquad
\tilde{\hat{y}}_{t,h,c}
=
\frac{\hat{y}_{t,h,c}-\mu_c^{\mathrm{train}}}{\sigma_c^{\mathrm{train}}}.
\]
The reported standardized MSE and MAE are therefore
\[
\mathrm{MSE}_{\mathrm{std}}
=
\frac{1}{N}
\sum_{t,h,c}
\left(
\tilde{\hat{y}}_{t,h,c}
-
\tilde{y}_{t,h,c}
\right)^2
=
\frac{1}{N}
\sum_{t,h,c}
\left(
\frac{\hat{y}_{t,h,c}-y_{t,h,c}}
{\sigma_c^{\mathrm{train}}}
\right)^2,
\]
\[
\mathrm{MAE}_{\mathrm{std}}
=
\frac{1}{N}
\sum_{t,h,c}
\left|
\tilde{\hat{y}}_{t,h,c}
-
\tilde{y}_{t,h,c}
\right|
=
\frac{1}{N}
\sum_{t,h,c}
\left|
\frac{\hat{y}_{t,h,c}-y_{t,h,c}}
{\sigma_c^{\mathrm{train}}}
\right|.
\]
These quantities are scale-normalized errors, not min--max normalized scores, and are not constrained to lie in $[0,1]$. A value above one simply means that the forecasting error is larger than the corresponding training-set scale. 

\subsection{Baselines, hyperparameters, and implementation}

The non-adaptive baselines are Autoformer, Informer, DLinear, Linear, NLinear, and frozen SiMBA \citep{2021_autoformerdecompositiontransformers_wu,2021_informerefficienttransformer_zhou,2023_aretransformerseffective_zeng,2024_simbasimplifiedmambabased_patro}. The adaptive comparison uses a shared frozen SiMBA backbone and a shared LoRA forecasting-head adapter. The continuous update adapter implements the inference-time update mechanism described in the Titans-inspired neural-memory literature \citep{2024_titanslearningmemorize_behrouz}, but not the unavailable official architecture. Black-Mamba uses the same adapter and differs only in the event-triggered update policy.

The adapter is a low-rank LoRA module on the forecasting head with rank 4, alpha 4, dropout 0, and validation calibration for one epoch. The online update uses the \texttt{matmul\_b\_only} implementation: the LoRA $A$ matrix is initialized from calibration and kept fixed online, while the $B$ component is updated from streaming gradients. The default leaky accumulator uses integration coefficient $\lambda=0.97$, with trigger threshold calibration from the validation surprisal quantile. Online learning rates are bounded by the validation-normalized update budget, with minimum $10^{-6}$ and maximum $10^{-4}$ in the recorded runs. Backbone training uses Adam with learning rate $10^{-4}$, train/eval batch size 512 in the representative synthetic runs, and effective test batch size 1 for streaming evaluation. Exact per-run configurations are stored in the run summary JSON files.

\subsection{Statistical testing protocol}

The comparison between Black-Mamba and the continuous-update adapter is performed on the same online forecasting stream. This means that, for a fixed dataset and prediction horizon, both methods produce a prediction for exactly the same test windows. We therefore treat the evaluation as a paired comparison: for each test window $t$, we compute the loss of the continuous-update adapter and the loss of Black-Mamba on the same target window, and then compare their difference.

Formally, let
\[
\ell_t^{\mathrm{cont}}
\]
be the loss of the continuous-update adapter on test window $t$, and let
\[
\ell_t^{\mathrm{BM}}
\]
be the corresponding loss of Black-Mamba on the same window. We define the paired loss difference as
\[
d_t = \ell_t^{\mathrm{cont}} - \ell_t^{\mathrm{BM}}.
\]
With this convention, $d_t>0$ means that the continuous-update adapter has larger loss than Black-Mamba on that window, while $d_t<0$ means that the continuous-update adapter is better on that window. The quantity of interest is the average paired difference,
\[
\bar{d} = \frac{1}{T}\sum_{t=1}^{T} d_t.
\]
A positive value of $\bar{d}$ indicates that, on average, continuous updating performs worse than Black-Mamba.

A standard paired $t$-test would assume that the sequence of differences $(d_t)$ is independent across test windows. This assumption is not appropriate for our forecasting traces, because consecutive windows strongly overlap in time and their errors are autocorrelated. To account for this temporal dependence, we use a HAC/Newey--West standard error for $\bar{d}$. The resulting test statistic is
\[
z = \frac{\bar{d}}{\widehat{\mathrm{se}}_{\mathrm{HAC}}(\bar{d})},
\]
where $\widehat{\mathrm{se}}_{\mathrm{HAC}}(\bar{d})$ estimates the uncertainty of the mean difference while allowing nearby windows to be correlated. For the real forecasting experiments, we set the HAC lag to the prediction horizon minus one, so that the standard error accounts for the overlap induced by multi-step forecasting windows.

We test the paired one-sided hypothesis $H_0:\mathbb{E}[d_t]\le 0$ versus $H_1:\mathbb{E}[d_t]>0$, with moving-block bootstrap confidence intervals used only as a robustness check on the sign and scale of the estimated difference.
For synthetic streams, $H=1$, so adjacent predictions have no target-horizon overlap; the protocol therefore reduces to a lag-zero HAC estimate and an iid bootstrap over paired differences.

We use $p<0.05$ as a conventional threshold for local statistical significance. However, we do not interpret p-values in isolation: each result is discussed together with the percentage effect size and the confidence interval. This is important because, on long traces, extremely small differences can become statistically detectable while remaining practically irrelevant. The main text reports only the real-drift p-values used in the central discussion, while full per-horizon results are reported in Appendix~\ref{app:additional-results}.

\subsection{Compute resources}

Compute-intensive runs were executed on a DGX system with 8 NVIDIA A100 40GB GPUs. Most development, trace processing, plotting, and routine analysis were performed on a workstation with an NVIDIA RTX 4070 Ti GPU. The final statistical analyses and figures in this draft were generated from saved CSV/JSON trace artifacts and do not require re-training.

\section{Additional Results}
\label{app:additional-results}

This section reports the additional results supporting the experimental discussion in the main paper. We include full per-horizon benchmark tables, complete paired adaptive-policy comparisons, synthetic trace diagnostics, noise and stress ablations, update-gate sensitivity observations, and the concept-drift diagnostics used to identify the real-drift cases.

\subsection{Full Real Benchmark Tables}
\label{app:full-real-benchmarks}

Table~\ref{tab:real-full-appendix} expands the averaged real benchmark table from the main paper into per-horizon results. The table confirms that most differences among frozen SiMBA, the continuous-update adapter, and Black-Mamba are below standard benchmark-table precision on real datasets. This supports the interpretation used in the main text: on most real benchmarks, online adaptation does not materially change aggregate forecasting accuracy because the test streams are not uniformly drifting. The relevant evidence is therefore not broad real-data dominance, but whether Black-Mamba preserves the backbone while reducing updates and whether it avoids degradation in the strongest real-drift cases.

\begin{table*}[t]
\centering
\caption{Full real benchmark results by dataset and horizon. Each cell reports standardized MSE and MAE; lower is better. Bold and underline mark the best and second-best results across all models; light yellow and light blue mark the best and second-best results among SiMBA, Cont. update, and Black-Mamba, isolating update-policy effects for the chosen frozen baseline. Ties are highlighted within the corresponding rank.}
\label{tab:real-full-appendix}
\scriptsize
\resizebox{\textwidth}{!}{
\begin{tabular}{ll*{8}{cc}}
\toprule
Dataset & Horizon
& \multicolumn{2}{c}{Autoformer}
& \multicolumn{2}{c}{Informer}
& \multicolumn{2}{c}{DLinear}
& \multicolumn{2}{c}{Linear}
& \multicolumn{2}{c}{NLinear}
& \multicolumn{2}{c}{SiMBA}
& \multicolumn{2}{c}{Cont. update}
& \multicolumn{2}{c}{Black-Mamba} \\
\cmidrule(lr){3-4}\cmidrule(lr){5-6}\cmidrule(lr){7-8}\cmidrule(lr){9-10}
\cmidrule(lr){11-12}\cmidrule(lr){13-14}\cmidrule(lr){15-16}\cmidrule(lr){17-18}
& & MSE & MAE & MSE & MAE & MSE & MAE & MSE & MAE
& MSE & MAE & MSE & MAE & MSE & MAE & MSE & MAE \\
\midrule
\multirow{4}{*}{ETTh1}
& 96  & 0.600 & 0.530 & 1.034 & 0.733 & \underline{0.512} & \underline{0.468} & \textbf{0.509} & \textbf{0.467} & 0.525 & 0.469 & \policybest{0.516} & \policybest{0.472} & \policybest{0.516} & \policybest{0.472} & \policybest{0.516} & \policybest{0.472} \\
& 192 & 0.741 & 0.592 & 1.166 & 0.796 & \underline{0.605} & \underline{0.508} & \textbf{0.603} & \textbf{0.507} & 0.631 & 0.516 & \policybest{0.623} & \policybest{0.519} & \policybest{0.623} & \policybest{0.519} & \policybest{0.623} & \policybest{0.519} \\
& 336 & 0.842 & 0.637 & 1.210 & 0.798 & \underline{0.663} & \underline{0.545} & \textbf{0.661} & \textbf{0.542} & 0.721 & 0.562 & \policybest{0.706} & \policybest{0.561} & \policybest{0.706} & \policybest{0.561} & \policybest{0.706} & \policybest{0.561} \\
& 720 & 0.926 & 0.692 & 1.333 & 0.861 & \underline{0.729} & \underline{0.602} & \textbf{0.724} & \textbf{0.596} & 0.808 & 0.617 & \policybest{0.829} & \policybest{0.635} & \policybest{0.829} & \policybest{0.635} & \policybest{0.829} & \policybest{0.635} \\
\midrule
\multirow{4}{*}{ETTh2}
& 96  & 0.322 & 0.386 & 2.086 & 1.103 & 0.342 & 0.403 & 0.335 & 0.395 & \underline{0.273} & \underline{0.345} & \policybest{\textbf{0.265}} & \policybest{\textbf{0.339}} & \policybest{\textbf{0.265}} & \policybest{\textbf{0.339}} & \policybest{\textbf{0.265}} & \policybest{\textbf{0.339}} \\
& 192 & 0.388 & 0.420 & 3.755 & 1.550 & 0.448 & 0.465 & 0.410 & 0.440 & \underline{0.344} & \underline{0.388} & \policybest{\textbf{0.338}} & \policybest{\textbf{0.381}} & \policybest{\textbf{0.338}} & \policybest{\textbf{0.381}} & \policybest{\textbf{0.338}} & \policybest{\textbf{0.381}} \\
& 336 & 0.444 & 0.457 & 4.581 & 1.785 & 0.630 & 0.550 & 0.532 & 0.502 & \underline{0.425} & \underline{0.430} & \policybest{\textbf{0.410}} & \policybest{\textbf{0.425}} & \policybest{\textbf{0.410}} & \policybest{\textbf{0.425}} & \policybest{\textbf{0.410}} & \policybest{\textbf{0.425}} \\
& 720 & 0.601 & 0.545 & 4.564 & 1.747 & 0.842 & 0.639 & 0.971 & 0.688 & \underline{0.564} & \underline{0.504} & \policybest{\textbf{0.557}} & \policybest{\textbf{0.496}} & \policybest{\textbf{0.557}} & \policybest{\textbf{0.496}} & \policybest{\textbf{0.557}} & \policybest{\textbf{0.496}} \\
\midrule
\multirow{4}{*}{ETTm1}
& 96  & 0.661 & 0.552 & 0.693 & 0.590 & \underline{0.412} & \textbf{0.413} & \textbf{0.411} & \textbf{0.413} & 0.422 & \underline{0.419} & \policybest{0.422} & \policybest{0.425} & \policybest{0.422} & \policybest{0.425} & \policybest{0.422} & \policybest{0.425} \\
& 192 & 0.748 & 0.588 & 0.740 & 0.607 & \textbf{0.477} & \textbf{0.443} & \textbf{0.477} & \textbf{0.443} & 0.494 & \underline{0.451} & \policysecond{0.487} & \policybest{0.453} & \policybest{\underline{0.486}} & \policybest{0.453} & \policysecond{0.487} & \policybest{0.453} \\
& 336 & 0.756 & 0.593 & 0.884 & 0.662 & \textbf{0.545} & \textbf{0.476} & \underline{0.546} & \underline{0.478} & 0.572 & 0.488 & \policybest{0.560} & \policybest{0.490} & \policybest{0.560} & \policybest{0.490} & \policybest{0.560} & \policybest{0.490} \\
& 720 & 0.867 & 0.636 & 1.071 & 0.741 & \textbf{0.644} & \textbf{0.524} & \underline{0.646} & \underline{0.526} & 0.693 & 0.542 & \policybest{0.683} & \policybest{0.540} & \policybest{0.683} & \policybest{0.540} & \policybest{0.683} & \policybest{0.540} \\
\midrule
\multirow{4}{*}{ETTm2}
& 96  & 0.197 & 0.302 & 0.509 & 0.579 & 0.169 & 0.277 & 0.165 & \underline{0.270} & \textbf{0.163} & \textbf{0.266} & \policybest{\underline{0.164}} & \policybest{\underline{0.270}} & \policybest{\underline{0.164}} & \policybest{\underline{0.270}} & \policybest{\underline{0.164}} & \policybest{\underline{0.270}} \\
& 192 & 0.237 & 0.326 & 0.792 & 0.725 & 0.237 & 0.333 & 0.223 & 0.319 & \textbf{0.214} & \textbf{0.301} & \policybest{\underline{0.222}} & \policybest{\underline{0.312}} & \policybest{\underline{0.222}} & \policybest{\underline{0.312}} & \policybest{\underline{0.222}} & \policybest{\underline{0.312}} \\
& 336 & 0.301 & 0.367 & 1.096 & 0.812 & 0.348 & 0.413 & 0.305 & 0.381 & \textbf{0.268} & \textbf{0.336} & \policybest{\underline{0.278}} & \policybest{\underline{0.350}} & \policybest{\underline{0.278}} & \policybest{\underline{0.350}} & \policybest{\underline{0.278}} & \policybest{\underline{0.350}} \\
& 720 & 0.376 & 0.409 & 1.960 & 1.091 & 0.459 & 0.474 & 0.519 & 0.504 & \textbf{0.355} & \textbf{0.385} & \policybest{\underline{0.364}} & \policybest{\underline{0.400}} & \policybest{\underline{0.364}} & \policybest{\underline{0.400}} & \policybest{\underline{0.364}} & \policybest{\underline{0.400}} \\
\midrule
\multirow{4}{*}{Electricity}
& 96  & 0.239 & 0.349 & 0.639 & 0.561 & 0.205 & 0.283 & \underline{0.204} & 0.281 & 0.206 & \underline{0.277} & \policybest{\textbf{0.177}} & \policybest{\textbf{0.257}} & \policybest{\textbf{0.177}} & \policybest{\textbf{0.257}} & \policybest{\textbf{0.177}} & \policybest{\textbf{0.257}} \\
& 192 & 0.253 & 0.353 & 0.624 & 0.565 & 0.204 & 0.287 & \underline{0.203} & 0.284 & 0.205 & \underline{0.278} & \policybest{\textbf{0.184}} & \policybest{\textbf{0.265}} & \policybest{\textbf{0.184}} & \policybest{\textbf{0.265}} & \policybest{\textbf{0.184}} & \policybest{\textbf{0.265}} \\
& 336 & 0.268 & 0.363 & 0.551 & 0.516 & \underline{0.221} & 0.302 & 0.222 & 0.304 & 0.225 & \underline{0.295} & \policybest{\textbf{0.205}} & \policybest{\textbf{0.282}} & \policybest{\textbf{0.205}} & \policybest{\textbf{0.282}} & \policybest{\textbf{0.205}} & \policybest{\textbf{0.282}} \\
& 720 & 0.309 & 0.391 & 0.546 & 0.516 & \underline{0.266} & 0.339 & 0.267 & 0.341 & 0.274 & \underline{0.330} & \policybest{\textbf{0.257}} & \policybest{\textbf{0.321}} & \policybest{\textbf{0.257}} & \policybest{\textbf{0.321}} & \policybest{\textbf{0.257}} & \policybest{\textbf{0.321}} \\
\midrule
\multirow{4}{*}{Exchange Rate}
& 96  & 0.365 & 0.423 & 3.449 & 1.499 & 0.232 & 0.331 & \underline{0.213} & \underline{0.310} & \textbf{0.205} & \textbf{0.296} & 0.223 & 0.313 & \policybest{0.220} & \policybest{\underline{0.310}} & \policysecond{0.222} & \policysecond{0.312} \\
& 192 & 0.700 & 0.599 & 3.839 & 1.575 & 0.509 & 0.497 & 0.541 & 0.519 & \textbf{0.425} & \textbf{0.433} & \policybest{\underline{0.456}} & \policysecond{0.448} & \policysecond{0.457} & \policybest{\underline{0.447}} & \policybest{\underline{0.456}} & \policysecond{0.448} \\
& 336 & 1.011 & 0.746 & 4.391 & 1.672 & 1.066 & 0.755 & 0.918 & 0.696 & \textbf{0.702} & \textbf{0.576} & 0.755 & \policysecond{0.597} & \policybest{\underline{0.747}} & \policybest{\underline{0.595}} & \policysecond{0.754} & \policysecond{0.597} \\
& 720 & 1.703 & 1.016 & 5.788 & 1.984 & 3.192 & 1.328 & 3.242 & 1.343 & \textbf{1.303} & \textbf{0.838} & \policybest{\underline{1.610}} & \policybest{\underline{0.925}} & 1.640 & 0.931 & \policysecond{1.611} & \policysecond{0.926} \\
\midrule
\multirow{4}{*}{Traffic}
& 96  & \underline{0.617} & \underline{0.393} & 0.768 & 0.415 & 0.758 & 0.482 & 0.740 & 0.471 & 0.742 & 0.460 & \policybest{\textbf{0.440}} & \policybest{\textbf{0.279}} & \policybest{\textbf{0.440}} & \policybest{\textbf{0.279}} & \policybest{\textbf{0.440}} & \policybest{\textbf{0.279}} \\
& 192 & \underline{0.635} & \underline{0.405} & 0.831 & 0.452 & 0.720 & 0.471 & 0.657 & 0.435 & 0.649 & 0.420 & \policybest{\textbf{0.439}} & \policybest{\textbf{0.278}} & \policybest{\textbf{0.439}} & \policybest{\textbf{0.278}} & \policybest{\textbf{0.439}} & \policybest{\textbf{0.278}} \\
& 336 & 0.717 & 0.460 & 0.940 & 0.507 & 0.714 & 0.462 & 0.679 & 0.440 & \underline{0.654} & \underline{0.417} & \policybest{\textbf{0.444}} & \policybest{\textbf{0.276}} & \policybest{\textbf{0.444}} & \policybest{\textbf{0.276}} & \policybest{\textbf{0.444}} & \policybest{\textbf{0.276}} \\
& 720 & \underline{0.617} & \underline{0.381} & 0.864 & 0.469 & 0.882 & 0.537 & 0.788 & 0.501 & 0.761 & 0.481 & \policybest{\textbf{0.502}} & \policybest{\textbf{0.307}} & \policybest{\textbf{0.502}} & \policybest{\textbf{0.307}} & \policybest{\textbf{0.502}} & \policybest{\textbf{0.307}} \\
\midrule
\multirow{4}{*}{Weather}
& 96  & 0.581 & 0.501 & \textbf{0.488} & \textbf{0.443} & 0.534 & 0.461 & 0.534 & 0.461 & 0.552 & 0.463 & \policybest{\underline{0.524}} & \policybest{\underline{0.446}} & \policybest{\underline{0.524}} & \policybest{\underline{0.446}} & \policybest{\underline{0.524}} & \policybest{\underline{0.446}} \\
& 192 & 0.633 & 0.529 & \textbf{0.546} & \textbf{0.483} & \underline{0.586} & 0.494 & \underline{0.586} & \underline{0.493} & 0.616 & 0.499 & \policybest{0.594} & \policybest{\textbf{0.483}} & \policybest{0.594} & \policybest{\textbf{0.483}} & \policybest{0.594} & \policybest{\textbf{0.483}} \\
& 336 & 0.683 & 0.558 & 0.725 & 0.584 & \textbf{0.619} & \underline{0.514} & \textbf{0.619} & \underline{0.514} & 0.657 & 0.522 & \policybest{\underline{0.633}} & \policybest{\textbf{0.507}} & \policybest{\underline{0.633}} & \policybest{\textbf{0.507}} & \policybest{\underline{0.633}} & \policybest{\textbf{0.507}} \\
& 720 & 0.713 & 0.568 & 0.747 & 0.594 & \textbf{0.669} & \textbf{0.544} & \textbf{0.669} & \textbf{0.544} & 0.723 & 0.558 & \policybest{\underline{0.701}} & \policybest{\underline{0.545}} & \policybest{\underline{0.701}} & \policybest{\underline{0.545}} & \policybest{\underline{0.701}} & \policybest{\underline{0.545}} \\
\bottomrule
\end{tabular}
}
\end{table*}

\subsection{Full Adaptive-Policy Comparisons}
\label{app:full-adaptive-policy}

Table~\ref{tab:adaptive-policy-full-appendix} reports the paired adaptive-policy comparison for the real material-drift datasets and the synthetic drift streams. Positive deltas favor Black-Mamba, while negative deltas favor the continuous-update adapter. The real results show practical equivalence except for Exchange Rate at horizon 720, where continuous updating is worse than Black-Mamba. 

The synthetic results show that continuous updating often obtains slightly lower losses, especially when the drift signal is dense and consistently informative. This is expected: in synthetic streams where almost every new point contributes useful information about the current regime, an always-update policy can extract small additional gains. The key observation is that these gains remain numerically small, while Black-Mamba substantially reduces inference-time writes. This supports the intended trade-off rather than contradicting it: Black-Mamba is not designed to dominate continuous adaptation in every loss cell, but to preserve most of its benefit with fewer and more selective updates.

\begin{table*}[t]
\centering
\caption{Full adaptive-policy comparison between Black-Mamba and the Titans-inspired continuous-update adapter. $\Delta=100\cdot(\mathrm{Cont.}-\mathrm{BM})/\mathrm{BM}$; positive values favor Black-Mamba. $p_{\mathrm{dir}}$ is the HAC one-sided p-value in the direction of the observed mean difference.}
\label{tab:adaptive-policy-full-appendix}
\small
\begin{tabular}{llrrrrr}
\toprule
Dataset / scenario & Horizon & BM update rate & $\Delta$ MSE & $p_{\mathrm{dir}}$ & $\Delta$ MAE & $p_{\mathrm{dir}}$ \\
\midrule
\multirow{4}{*}{Exchange Rate}
& 96  & 7.5\%  & -0.685\% & 0.370 & -0.599\% & 0.270 \\
& 192 & 10.4\% & +0.087\% & 0.420 & -0.160\% & 0.208 \\
& 336 & 9.9\%  & -0.926\% & 0.283 & -0.286\% & 0.371 \\
& 720 & 9.1\%  & +1.797\% & 0.056 & +0.556\% & 0.039 \\
\midrule
\multirow{4}{*}{Traffic}
& 96  & 46.5\% & -0.001\% & $<10^{-3}$ & -0.003\% & $<10^{-3}$ \\
& 192 & 43.8\% & -0.000\% & 0.002 & -0.001\% & $<10^{-3}$ \\
& 336 & 42.5\% & -0.000\% & 0.007 & -0.001\% & $<10^{-3}$ \\
& 720 & 46.3\% & -0.000\% & $<10^{-3}$ & -0.000\% & $<10^{-3}$ \\
\midrule
L-Gradual & 1 & 53.4\% & -0.329\% & $<10^{-3}$ & -0.353\% & $<10^{-3}$ \\
L-Gradual noisy & 1 & 49.2\% & -0.408\% & $<10^{-3}$ & -0.287\% & $<10^{-3}$ \\
Recurrent seasonal & 1 & 40.9\% & -0.847\% & $<10^{-3}$ & -0.502\% & $<10^{-3}$ \\
Recurrent seasonal noisy & 1 & 38.6\% & -0.454\% & $<10^{-3}$ & -1.064\% & $<10^{-3}$ \\
Regime plateau & 1 & 56.6\% & -0.360\% & $<10^{-3}$ & -0.391\% & $<10^{-3}$ \\
Regime plateau noisy & 1 & 48.1\% & -0.315\% & $<10^{-3}$ & -0.231\% & $<10^{-3}$ \\
Selective stress & 1 & 78.4\% & +0.114\% & $<10^{-3}$ & -0.105\% & $<10^{-3}$ \\
\bottomrule
\end{tabular}
\end{table*}

\subsection{Synthetic Update-Efficiency Diagnostics}
\label{app:synthetic-update-efficiency}

Table~\ref{tab:synthetic-selectivity-diagnostics} reports diagnostics that are not captured by the aggregate adaptive-policy table. It measures whether Black-Mamba allocates updates selectively across the stream. For each synthetic scenario, we compare the Black-Mamba update rate in the easiest and hardest segment quartiles, where segment difficulty is measured by segment MSE. The positive correlations and hard/easy update-rate ratios show that the gate is not merely reducing updates uniformly: it increases update intensity when the stream becomes harder.

The heatmaps in Figure~\ref{fig:heatmaps} visualize the same behavior over time. The update-rate heatmap shows that Black-Mamba remains sparse in easier regions but increases update frequency in difficult segments. The relative-MSE heatmap shows that these difficult regions are not uniformly distributed across the stream, which is precisely the setting in which a selective policy is meaningful: the model should not pay the cost of continuous adaptation when the stream is locally stable, but should become more active when mismatch accumulates.

The last two columns of Table~\ref{tab:synthetic-selectivity-diagnostics} report how often Black-Mamba has lower MSE than the continuous-update adapter at segment and rolling-window level. These counts are especially informative for the selective-update stress stream, where Black-Mamba wins 18/20 segments and 384/449 rolling windows. This supports the intended mechanism: transient or corrupted errors need not be written immediately, while persistent mismatch still accumulates enough evidence to trigger adaptation.

Finally, Figure~\ref{fig:baseline-capture} relates update rate to the fraction of the continuous-update gain over frozen SiMBA captured by Black-Mamba. Across the six synthetic scenarios with the frozen SiMBA trace, Black-Mamba captures $98.9\%$ of the MSE improvement obtained by continuous updating while using substantially fewer updates. This is the most direct efficiency interpretation of the synthetic results: Black-Mamba does not remove the benefit of test-time adaptation; it obtains almost all of it with a much smaller write budget.

\begin{table*}[t]
\centering
\caption{Synthetic selectivity diagnostics. ``Easy'' and ``hard'' refer to the lowest- and highest-MSE segment quartiles. BM wins count segments or rolling windows where Black-Mamba has lower MSE than the continuous-update adapter.}
\label{tab:synthetic-selectivity-diagnostics}
\small
\resizebox{\textwidth}{!}{%
\begin{tabular}{lrrrrrr}
\toprule
Scenario & Easy update rate & Hard update rate & Hard/easy ratio & Corr. MSE/update & BM segment wins & BM rolling wins \\
\midrule
L-Gradual & 39.0\% & 71.7\% & 1.84$\times$ & +0.960 & 2/20 & 86/449 \\
L-Gradual noisy & 36.7\% & 64.1\% & 1.74$\times$ & +0.989 & 1/20 & 47/449 \\
Recurrent seasonal & 35.4\% & 47.7\% & 1.35$\times$ & +0.887 & 2/20 & 99/449 \\
Recurrent seasonal noisy & 36.8\% & 40.6\% & 1.10$\times$ & +0.689 & 6/20 & 150/449 \\
Regime plateau & 41.3\% & 74.4\% & 1.80$\times$ & +0.986 & 0/20 & 42/449 \\
Regime plateau noisy & 39.6\% & 59.2\% & 1.50$\times$ & +0.995 & 7/20 & 125/449 \\
Selective update stress & 43.0\% & 98.3\% & 2.29$\times$ & +0.823 & 18/20 & 384/449 \\
\bottomrule
\end{tabular}}
\end{table*}

\begin{figure*}[t]
\centering
\begin{minipage}{0.49\textwidth}
\centering
\includegraphics[width=\linewidth]{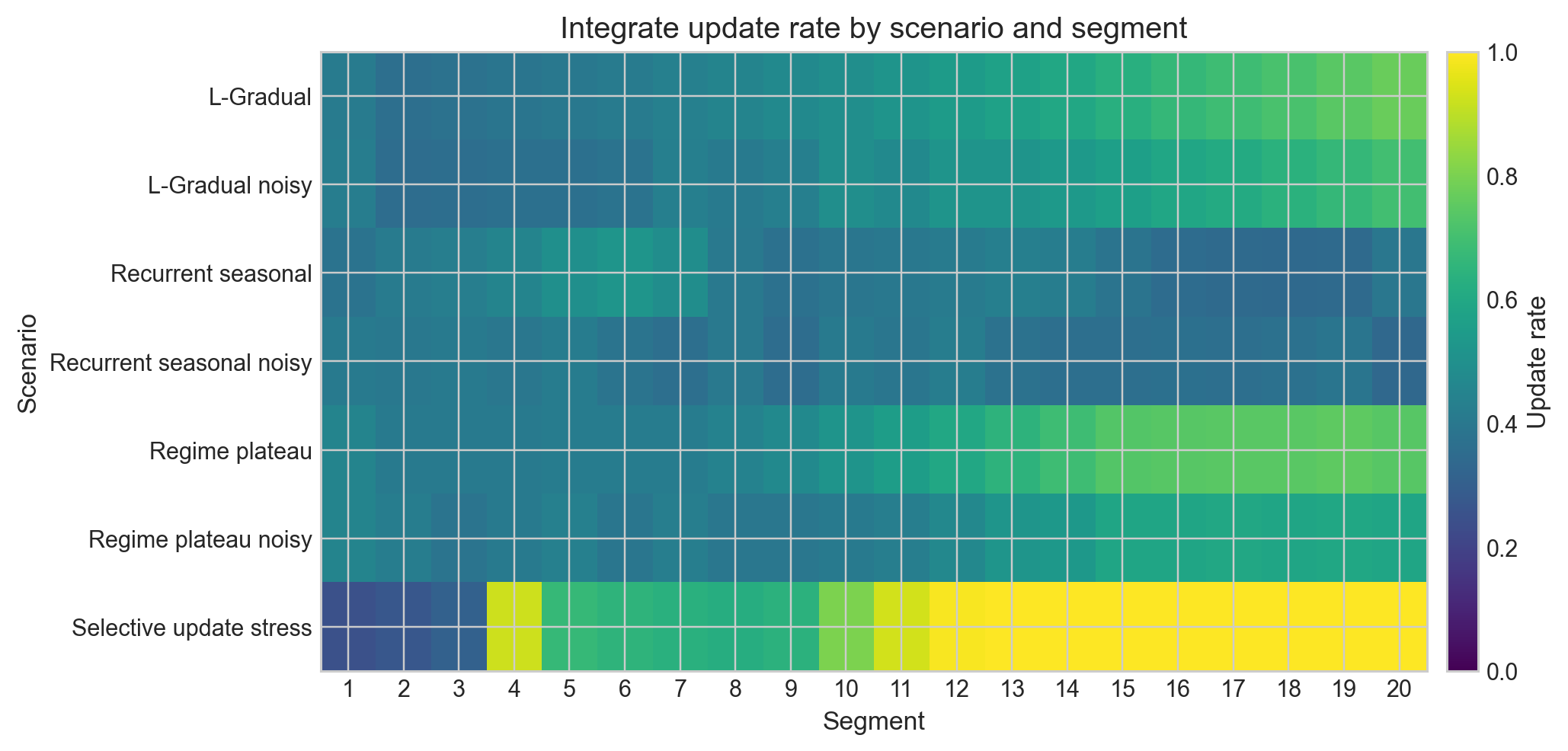}
\end{minipage}
\hfill
\begin{minipage}{0.49\textwidth}
\centering
\includegraphics[width=\linewidth]{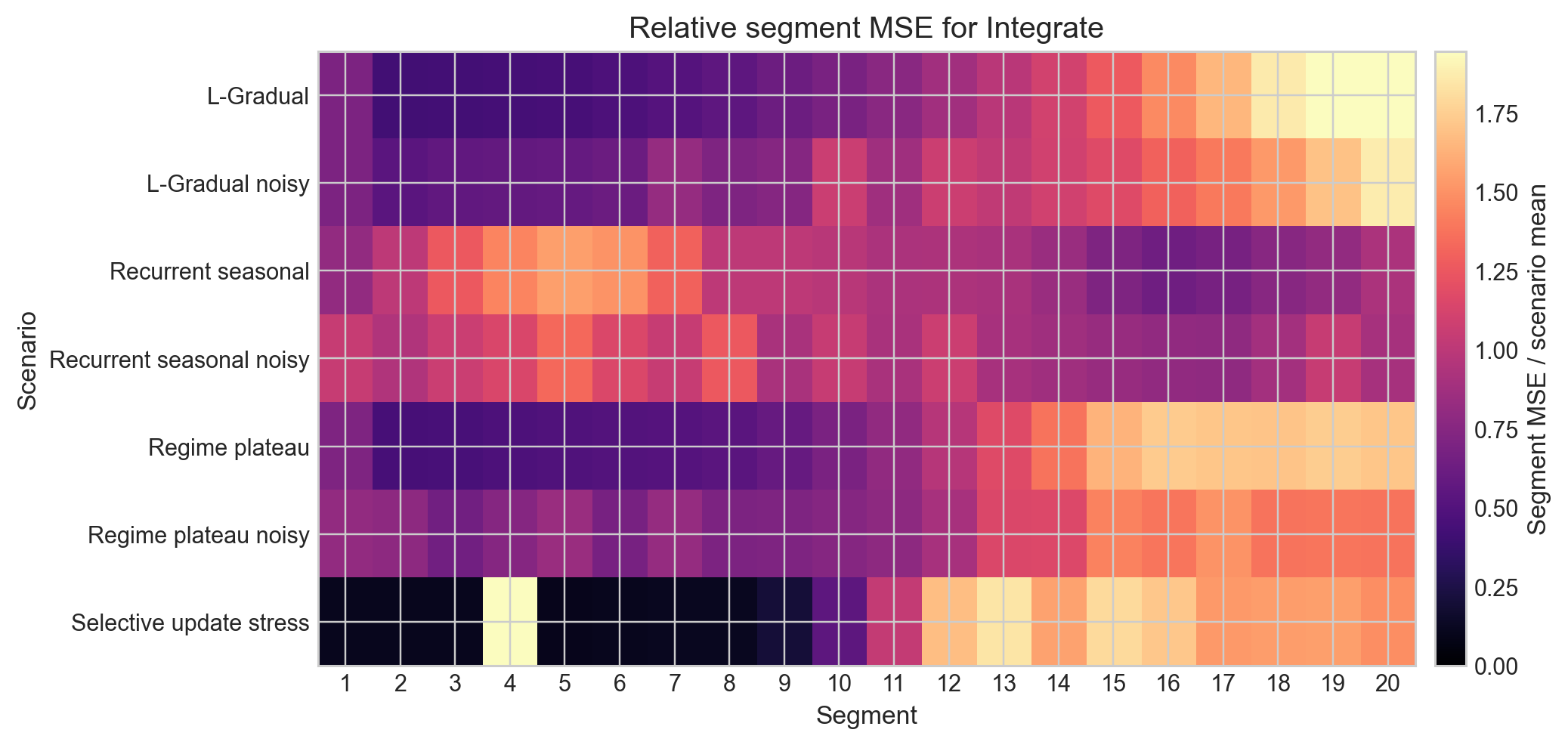}
\end{minipage}
\caption{Black-Mamba temporal diagnostics. Left: update rate by scenario and segment. Right: relative segment MSE. The two panels show that update intensity tends to rise where the stream becomes harder.}
\label{fig:heatmaps}
\end{figure*}

\begin{figure*}[t]
\centering
\includegraphics[width=0.52\textwidth]{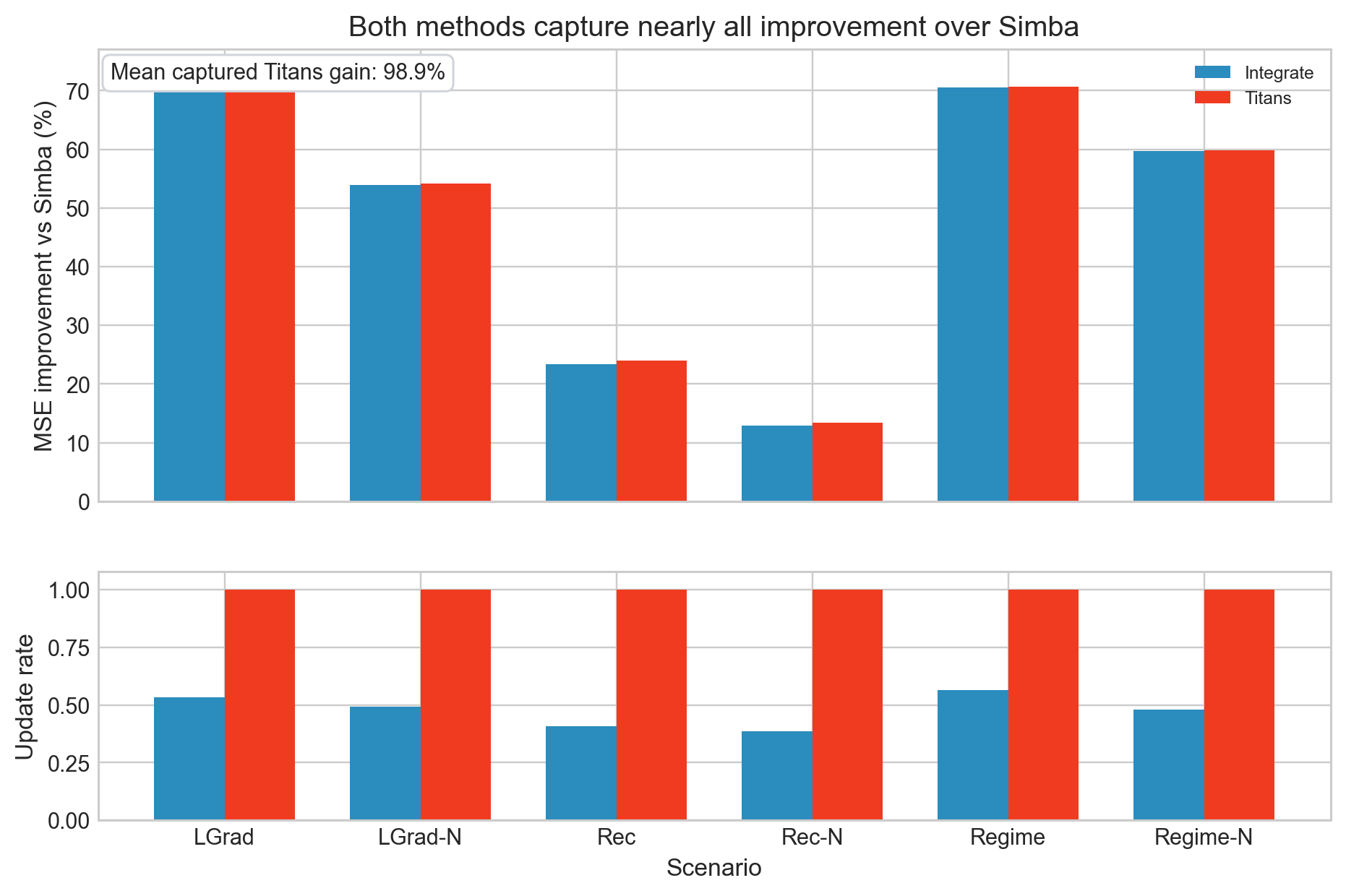}
\caption{Baseline-capture diagnostic. In the six synthetic scenarios with a frozen SiMBA trace in the analysis index, Black-Mamba captures on average 98.9\% of the MSE improvement obtained by continuous updating over frozen SiMBA, while using substantially fewer updates.}
\label{fig:baseline-capture}
\end{figure*}

\subsection{Update-Policy Hyperparameters}
\label{app:update-policy-hyperparameters}

Black-Mamba is adaptive at inference time, but its update policy still depends on hyperparameters that define how evidence is accumulated and when an update is triggered. In our experiments, these parameters are not fixed from arbitrary constants alone: the trigger calibration is selected on the validation stream, while the leak controls the time scale over which surprisal evidence persists. This is important because the update gate is itself part of the deployed system. It determines which errors are treated as transient noise and which errors are interpreted as evidence of persistent drift.

This also introduces a slower form of aging. A frozen forecasting model can become obsolete when the data-generating process changes, but an adaptive model can still inherit assumptions from the validation period through its update policy. If the validation stream underestimates or overestimates the drift intensity, noise level, or regime persistence of the deployment stream, the gate may become too conservative or too reactive. In this sense, Black-Mamba reduces the need to continuously retrain the forecasting model, but it does not remove the need to monitor whether the update policy remains calibrated to the operational domain.

This limitation is also an opportunity for domain-specific deployment. In domains where practitioners have prior knowledge about drift time scales, such as network traffic, recommender systems, information retrieval logs, energy demand, financial streams, or sensor networks, the leak and trigger calibration can be tied to meaningful operational frequencies. For example, one may ask whether the update-policy calibration should be refreshed daily, weekly, after detected regime changes, or only when validation-style monitoring shows that the gate is under- or over-triggering. This refresh schedule is likely to be much cheaper than full model retraining, but it should still be studied explicitly.

We therefore view update-policy calibration as a separate layer of adaptation. The experiments in this paper show that a validation-calibrated gate can reduce inference-time writes while preserving most of the benefit of continuous updating, but they do not exhaust the design space of time-varying gates. Future work should study how leak, trigger thresholds, and surprisal calibration evolve under long deployments, and how often they must be refreshed relative to full backbone retraining.

\subsection{Noise and Stress Ablations}
\label{app:noise-stress-ablation}

The noisy synthetic variants ablate the reliability of the drift signal by adding noise to the gradual, recurrent seasonal, and regime-plateau streams. In these settings, the continuous-update adapter remains slightly better than Black-Mamba in most MSE and MAE comparisons, consistent with the data construction: noise perturbs the stream, but the underlying drift signal remains dense and continuously informative. Since the continuous adapter writes at nearly every step, it can exploit frequent small signals more aggressively than a thresholded policy.

This does not invalidate the proposed update rule. The claim is not that selective updating must always minimize raw loss when every observation is useful, but that continuous updating pays for marginal gains with a much larger update budget and can become vulnerable when instantaneous errors are unreliable evidence of persistent drift. The noisy ablations sharpen this trade-off: even when continuous updating benefits from dense synthetic supervision, Black-Mamba remains close in accuracy while using substantially fewer updates.

The selective-update stress stream tests the opposite regime: long easy regions, transient corrupted bursts, and later persistent difficulty. This setting matches the failure mode targeted by Black-Mamba, where not every large error should be written into the adapter. The result supports leaky surprisal accumulation: transient corruptions can decay without permanent updates, while persistent mismatch still accumulates enough evidence to trigger adaptation.

\subsection{Concept-Drift Diagnostics}
\label{app:concept-drift-diagnostics}

The drift diagnostics explain why the main text does not claim broad real-data superiority from online adaptation. We use a model-based concept-transfer probe: a reference predictor $f_0$ is trained on the initial regime, while a window-local predictor $f_w$ is trained on the first half of a later window and evaluated against $f_0$ on the second half. For forecasting windows, we define
\[
D_w=\max\left(0,\frac{\mathrm{RMSE}(f_0,w)}{\mathrm{RMSE}(f_0,\mathrm{val})}-1\right),
\qquad
G_w=\max\left(0,\frac{\mathrm{RMSE}(f_0,w)-\mathrm{RMSE}(f_w,w)}{\mathrm{RMSE}(f_0,w)}\right),
\]
and report $\mathrm{concept\_score}_w=\max(D_wG_w,G_w)$. Thus, drift evidence increases when the train-regime predictor degrades and a local predictor recovers loss on the same window.

Among the real forecasting datasets, only Exchange Rate and Traffic are classified as material concept drift. Exchange Rate is the most severe case, with 6 strong windows and maximum concept score 18.410; Traffic has 7 strong windows but a smaller maximum concept score of 1.811. ETTh1 is weak or inconclusive, while the remaining real datasets are classified as no material concept drift.

\begin{table}[h]
\centering
\caption{Real-dataset drift diagnostics.}
\label{tab:real-drift}
\begin{tabular}{lrrrr}
\toprule
Dataset & Verdict & Strong & Weak & Max score \\
\midrule
Exchange Rate & material & 6 & 0 & 18.410 \\
Traffic & material & 7 & 0 & 1.811 \\
ETTh1 & weak/inconclusive & 1 & 0 & 0.561 \\
ETTh2 & no material & 0 & 0 & 0.005 \\
ETTm1 & no material & 0 & 0 & 0.346 \\
ETTm2 & no material & 0 & 0 & 0.594 \\
Electricity & no material & 0 & 1 & 0.159 \\
Weather & no material & 0 & 0 & 0.000 \\
\bottomrule
\end{tabular}
\end{table}

All seven synthetic datasets contain drift by construction; five are material, while the two recurrent variants are weak/inconclusive because recurring regimes reduce persistent degradation.

\end{document}